\documentclass{article}

\usepackage{microtype}
\usepackage{graphicx}
\usepackage{subcaption}
\usepackage{booktabs} % for professional tables

\usepackage{hyperref}

\usepackage[accepted]{icml2026}

\usepackage{amsmath}
\usepackage{amssymb}
\usepackage{mathtools}
\usepackage{amsthm}
\usepackage{multicol}

\usepackage[capitalize,noabbrev]{cleveref}

\theoremstyle{plain}
\newtheorem{theorem}{Theorem}[section]
\newtheorem{proposition}[theorem]{Proposition}
\newtheorem{lemma}[theorem]{Lemma}
\newtheorem{corollary}[theorem]{Corollary}
\theoremstyle{definition}
\newtheorem{definition}[theorem]{Definition}

\newtheorem{conjecture}[theorem]{Conjecture}
\newtheorem{example}[theorem]{Example}

\theoremstyle{remark}
\newtheorem{remark}[theorem]{Remark}

\usepackage[textsize=tiny]{todonotes}

\newcommand{\parameterMap}[1]{\Psi_{#1}}
\DeclareMathOperator{\expdim}{expdim}

\providecommand{\demph}[1]{\textit{#1}}
\providecommand{\keyemph}[1]{\textbf{#1}}

\newcommand{\depth}{L} 
\newcommand{\dimV}{D}
\newcommand{\ambientSpace}{\operatorname{Sym}_{r^{\depth-1}}(\mathbb{R}^{d_0})^{\oplus d_\depth}}

\newcommand{\codim}{\operatorname{codim}}
\newcommand{\defect}{\operatorname{defect}}

\newcommand{\leftExpDimension}{\sum_{i=1}^{\depth} d_id_{i-1}-\sum_{i=1}^{\depth-1} d_i}
\newcommand{\RR}{\mathbb{R}}
\usepackage{enumitem}
\usepackage{float}

\icmltitlerunning{Minimal Filling Architectures of PNNs}

\begin{document}

\twocolumn[
  \icmltitle{
  Minimal Filling Architectures of Polynomial Neural Networks: Counterexamples, Frontier Search, and Defects
  }

  % It is OKAY to include author information, even for blind submissions: the
  % style file will automatically remove it for you unless you've provided
  % the [accepted] option to the icml2026 package.

  % List of affiliations: The first argument should be a (short) identifier you
  % will use later to specify author affiliations Academic affiliations
  % should list Department, University, City, Region, Country Industry
  % affiliations should list Company, City, Region, Country

  % You can specify symbols, otherwise they are numbered in order. Ideally, you
  % should not use this facility. Affiliations will be numbered in order of
  % appearance and this is the preferred way.
  \icmlsetsymbol{equal}{*}

  \begin{icmlauthorlist}
    \icmlauthor{Kevin Dao}{equal,yyy}
    \icmlauthor{Jose Israel Rodriguez}{equal,yyy}
    % \icmlauthor{Firstname3 Lastname3}{comp}
    % \icmlauthor{Firstname4 Lastname4}{sch}
    % \icmlauthor{Firstname5 Lastname5}{yyy}
    % \icmlauthor{Firstname6 Lastname6}{sch,yyy,comp}
    % \icmlauthor{Firstname7 Lastname7}{comp}
    % %\icmlauthor{}{sch}
    % \icmlauthor{Firstname8 Lastname8}{sch}
    % \icmlauthor{Firstname8 Lastname8}{yyy,comp}
    % %\icmlauthor{}{sch}
    % %\icmlauthor{}{sch}
  \end{icmlauthorlist}

  \icmlaffiliation{yyy}{Department of Mathematics, University of Wisconsin-Madison, Wisconsin, USA}
  %\icmlaffiliation{comp}{Company Name, Location, Country}
  %\icmlaffiliation{sch}{School of ZZZ, Institute of WWW, Location, Country}

  \icmlcorrespondingauthor{Kevin Dao}{ktdao@wisc.edu}
  %\icmlcorrespondingauthor{Firstname2 Lastname2}{first2.last2@www.uk}
  % You may provide any keywords that you find helpful for describing your
  % paper; these are used to populate the "keywords" metadata in the PDF but
  % will not be shown in the document
  \icmlkeywords{Polynomial Neural Networks, Neuroalgebraic Geometry}

  \vskip 0.3in
]

% this must go after the closing bracket ] following \twocolumn[ ...s

% This command actually creates the footnote in the first column listing the
% affiliations and the copyright notice. The command takes one argument, which
% is text to display at the start of the footnote. The \icmlEqualContribution
% command is standard text for equal contribution. Remove it (just {}) if you
% do not need this facility.

% Use ONE of the following lines. DO NOT remove the command.
% If you have no special notice, KEEP empty braces:
\printAffiliationsAndNotice{}  % no special notice (required even if empty)
% Or, if applicable, use the standard equal contribution text:
% \printAffiliationsAndNotice{\icmlEqualContribution}

\begin{abstract}
    We provide counterexamples to the unimodal minimal filling architecture conjecture for polynomial neural networks (PNNs) with 
    power
    activation functions.
    Fixing the input and output
    widths, the conjecture states that any minimal filling architecture has unimodal widths for the hidden layers.
    We found counterexamples via a frontier search, recursive dimension bounds on neurovarieties, and symbolic computation.
    Notably, several subarchitectures of our main example exhibit large defect, in contrast with the predominantly small-defect behavior observed in prior literature. 
\end{abstract}

\begin{figure*}[t]
  \centering
  \begin{tikzpicture}[scale=.9, %we can change scale and minimum size to make this smaller if need be
% originally scale =1 and minimum size = 6mm
% scale=0.78 is the largest I can scale to fit the last paragraph of 4.3 in, but it does seem to shift/add more white space in some parts. Feel free to change.
  neuron/.style={circle, draw, minimum size=6mm, inner sep=0pt},
  connection/.style={draw, opacity=0.35}
]

% ---------- Layer 1 (2) ----------
\foreach \i in {1,...,2}
  \node[neuron, fill=blue!30] (L1-\i)
  at (0, {(2+1)/2 - \i}) {};

% ---------- Layer 2 (3) ----------
\foreach \i in {1,...,3}
  \node[neuron, fill=green!30] (L2-\i)
  at (2, {(3+1)/2 - \i}) {};

% ---------- Layer 3 (4) ----------
\foreach \i in {1,...,4}
  \node[neuron, fill=yellow!40] (L3-\i)
  at (4, {(4+1)/2 - \i}) {};

% ---------- Layer 4 (5) ----------
\foreach \i in {1,...,5}
  \node[neuron, fill=orange!40] (L4-\i)
  at (6, {(5+1)/2 - \i}) {};

% ---------- Layer 5 (4) ----------
\foreach \i in {1,...,4}
  \node[neuron, fill=red!30] (L5-\i)
  at (8, {(4+1)/2 - \i}) {};

% ---------- Layer 6 (6) ----------
\foreach \i in {1,...,6}
  \node[neuron, fill=purple!30] (L6-\i)
  at (10, {(6+1)/2 - \i}) {};

% ---------- Layer 7 (4) ----------
\foreach \i in {1,...,4}
  \node[neuron, fill=cyan!30] (L7-\i)
  at (12, {(4+1)/2 - \i}) {};

% ---------- Layer 8 (1) ----------
\node[neuron, fill=gray!40] (L8-1) at (14, 0) {};

% ---------- Connections ----------
\foreach \i in {1,...,2}
  \foreach \j in {1,...,3}
    \draw[connection] (L1-\i) -- (L2-\j);

\foreach \i in {1,...,3}
  \foreach \j in {1,...,4}
    \draw[connection] (L2-\i) -- (L3-\j);

\foreach \i in {1,...,4}
  \foreach \j in {1,...,5}
    \draw[connection] (L3-\i) -- (L4-\j);

\foreach \i in {1,...,5}
  \foreach \j in {1,...,4}
    \draw[connection] (L4-\i) -- (L5-\j);

\foreach \i in {1,...,4}
  \foreach \j in {1,...,6}
    \draw[connection] (L5-\i) -- (L6-\j);

\foreach \i in {1,...,6}
  \foreach \j in {1,...,4}
    \draw[connection] (L6-\i) -- (L7-\j);

\foreach \i in {1,...,4}
  \draw[connection] (L7-\i) -- (L8-1);

\end{tikzpicture}
  \caption{Our first counterexample to the unimodal minimal filling architecture conjecture ($r=2$)} 
  \label{fig:wide}
\end{figure*}

\section{The Unimodal Minimal Filling Architecture Conjecture}

Neuroalgebraic geometry studies the geometry of function spaces parameterized by machine-learning models when they are algebraic or semi-algebraic in nature. 
It relates algebro-geometric invariants (e.g., dimension and singularities) to notions such as expressivity and training behavior \cite{Marchettietal2025}. 
This viewpoint involves other geometry-driven approaches in learning theory, including tropical-geometry descriptions of ReLU networks \cite{pmlr-v80-zhang18i-tropical-neu-ag,brandenburg2024-tropical-neu-ag}.
In this article, we focus on polynomial neural networks; related neuroalgebraic analyses of model dimension and identifiability have also been carried out for other polynomial architectures, for example lightning self-attention \cite{ICLR2025_259e59fe-neu-ag}.

We state the {unimodal minimal filling architecture} conjecture 
in its original form, Conjecture 12 of \cite{KTB}, and recall the necessary definitions.

\begin{definition}
    A (feedforward) \demph{polynomial neural network} (PNN) with activation function $\sigma$ 
    is a function
    of the form
    \[
    p_\theta:\mathbb{R}^{d_0}\to \mathbb{R}^{d_\depth},\quad 
p_\theta(x)=W_\depth\sigma W_{\depth-1}\sigma \dots \sigma W_1 x
    \]
    where $\sigma$ acts entrywise by $(\sigma(z))_j = z_j^r$ for some $r\in\mathbb{N}$.
    The parameters  are $\theta=(W_1,\dots,W_\depth)$, with each \demph{weight matrix}
$W_i\in\mathbb{R}^{d_i\times d_{i-1}}$.
    The network's \demph{architecture} is the sequence $\mathbf{d}=(d_0,\dots,d_\depth)$, and the network's
    \demph{depth} is $\depth$.
\end{definition}

It is immediate that $p_\theta$ is a homogeneous polynomial map 
$\mathbb{R}^{d_0}\to \mathbb{R}^{d_\depth}$ and that $p_\theta\in \ambientSpace$.
Since homogeneous polynomials of degree $D=r^{\depth-1}$ correspond to symmetric order-$D$ tensors, 
a depth-$\depth$ PNN 
induces a multi-stage factorization of such tensors.

\begin{remark}
In this article we use the terminology ``feedforward polynomial neural network"
to be consistent with~\cite{KLW,KTB}. In the literature, 
these are also referred to as ``monomial multilayer perceptrons (MLPs)". The MLP terminology is useful when distinguishing fully connected feed forward neural networks from convolutional neural networks. 
Moreover, prefacing MLP with monomial can 
clarify when the activation function is monomial, as opposed to a general polynomial in a single variable. 
\end{remark}

\begin{definition}
    For fixed $r$ and architecture $\mathbf{d}$, the  map
    $$\parameterMap{\mathbf{d},r}:
    \mathbb{R}^{{d_1\times d_0}}\times \cdots \times \mathbb{R}^{{d_\depth}\times d_{\depth-1}}
    \to \ambientSpace$$ given by $\theta\mapsto \begin{bmatrix}
        p_{\theta 1} & \dots & p_{\theta d_{\depth}}
    \end{bmatrix}^T$ where each $p_{\theta i}$ is a polynomial in the entries of the weight matrices is called
    the \demph{parameter map}. The codomain is referred to as the \demph{ambient space}, and its dimension is well known:
    \[
\dim\left({\ambientSpace}\right)=d_\depth\binom{d_0+r^{\depth-1}-1}{r^{\depth-1}}.
    \]
    The image of $\parameterMap{\mathbf{d},r}$ is a set of vectors of polynomials denoted $\mathcal{F}_{\mathbf{d},r}$ which we call the \demph{functional space}. Its Zariski closure $\mathcal{V}_{\mathbf{d},r}=\overline{\mathcal{F}_{\mathbf{d},r}}$ is called the \demph{neurovariety} as it is an irreducible algebraic variety. To simplify notation, we remove $r$ from the subscript when the context is clear.
\end{definition}

One goal of neuroalgebraic geometry is to characterize when the neurovariety and ambient space dimension agree. 

\begin{example}
Let $\mathbf{d}=(2,d_1,1)$, and denote the input variable into the network by ${x}=\begin{bmatrix}
    x_1 & x_2
\end{bmatrix}^T$. 
If the weights~are 
\begin{align*}
    W_1 =\begin{bmatrix}
        1 & 1\\
        1 & 2\\
        \vdots&\vdots \\
        1& d_1
    \end{bmatrix}
     \quad \text{and}\quad W_2=\begin{bmatrix}
        1 & 1 &\dots &1
    \end{bmatrix},
\end{align*}
then
 $\Psi_{(2,d_1,1),r}(W_1,W_2)$
 is the polynomial function 
\[
p_{W_1,W_2}(x)=  (x_1+x_2)^r+
(x_1+2x_2)^r+
\cdots+ (x_1+d_1x_2)^r
\]
in ${\operatorname{Sym}_{r}(\mathbb{R}^{2})}$, i.e., 
an order-$r$ symmetric tensor in $(\mathbb{R}^2)^{\otimes r}$.
\end{example}

One reason to focus on the neurovariety is due to Proposition 5 of \cite{KTB} which relates the dimension of a neurovariety 
$\mathcal{V}_{(d_0,\mathbf{a},d_\depth),r}$
with the expressivity of the functional space $\mathcal{F}_{(d_0,2\mathbf{a},d_\depth),r}$. By studying the neurovariety, one opens up the toolbox provided by algebraic geometry. A natural question is to study dimension (and other geometric properties) of the neurovariety. Dimension relates to the \keyemph{expressivity} of the PNN. 

The study of the dimension of neurovarieties has been done in various works such as \cite{KTB, KLW, FRWY, UBDC2025, Massarenti2025}.

\begin{definition}

The neurovariety $\mathcal{V}_{\mathbf{d},r}$ is said to be \demph{filling} if $\mathcal{V}_{\mathbf{d},r}=\ambientSpace$. If $r$ is fixed, we will say that the architecture $\mathbf{d}$ is filling if its neurovariety is filling.

We define a partial ordering on architectures of the same depth $\depth$ by declaring that $\mathbf{d}'\leq \mathbf{d}$ if $d_i'\leq d_i$ for $0\leq i\leq \depth$. If $\mathbf{d}'< \mathbf{d}$, we refer to $\mathbf{d}'$ as a \demph{subarchitecture} of $\mathbf{d}$.
\end{definition}

\begin{definition} For fixed $r$, an architecture $\mathbf{d}=(d_0,\dots,d_\depth)$ is
\demph{minimal filling} if $\mathbf{d}$ is filling and every subarchitecture
$\mathbf{d}'=(d_0,d_1',\dots,d_{\depth-1}',d_\depth)$ with $\mathbf{d}'<\mathbf{d}$
is not filling. We say it is a \demph{minimal filling architecture (MFA)}. 
\end{definition}

We are ready to state the {unimodal minimal filling architecture conjecture} of \cite{KTB}. 

\begin{conjecture}\label{conj:ktb}
    Fix $r,\depth,d_0,d_\depth$. If $\mathbf{d}=(d_0,d_1,\dots,d_\depth)$ is 
    a MFA, then there is an $i$ with $0\leq i\leq \depth$ such that 
    \[
    d_0\leq d_1\leq \dots\leq d_i\qquad\&\qquad d_i\geq d_{i+1}\geq \dots \geq d_\depth.
    \]
    That is, the sequence $\mathbf{d}$ is \keyemph{unimodal}.
\end{conjecture}

The motivation for the conjecture is that it conforms with ``conventional wisdom". In machine learning, the commonly held belief is that the network expands its capacity for feature representation to capture intricate patterns, and then narrows to refine these features for predictions.
One reason to have believed this conjecture is that it holds in the shallow  setting, where it follows from the Alexander--Hirschowitz theorem via the classical secant-Veronese dimension count.

\section{The Counterexample: $(2,3,4,5,4,6,4,1)$
}\label{s:counterexample}

Fix $r=2$. Our main contribution addresses Conjecture 12 in \cite{KTB}.
\begin{theorem}\label{theorem:counterexample}
The architecture $\mathbf{d}=(2,3,4,5,4,6,4,1)$ is a counterexample to the unimodal minimal filling architecture conjecture. That is, it is \keyemph{minimal filling}, and \keyemph{not unimodal}. 
\end{theorem}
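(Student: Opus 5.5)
Non-unimodality is immediate: the widths rise $2<3<4<5$, drop to $4$, rise again to $6$, and then fall. The whole proof therefore concerns minimal filling. For $r=2$, $L=7$ we have $D=2^{6}=64$ and $d_0=2$, $d_L=1$, so the ambient space is $\operatorname{Sym}_{64}(\RR^2)$, of dimension $65$. We must show two things. First, $\dim\mathcal V_{\mathbf d}=65$. Second, each of the seven maximal proper subarchitectures $\mathbf d-e_i$ ($1\le i\le 6$, decreasing one hidden width by one) is not filling. Since filling is monotone under the partial order, this suffices: a larger network can simulate a smaller one by zeroing weights, so the functional space only grows.

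\textbf{Filling.} I would certify $\dim\mathcal V_{\mathbf d}\ge 65$ by computing the rank of the Jacobian of $\parameterMap{\mathbf d,2}$ at one explicit parameter point. I would take random integer (or rational) weights and work exactly over $\mathbb Q$, or modulo a large prime, which still gives a valid lower bound on rank over $\mathbb Q$. Since the rank at a point is a lower bound for the generic rank, and that equals $\dim\mathcal V_{\mathbf d}$, rank $65$ proves filling. The computation is modest: there are $\sum d_id_{i-1}=6+12+20+20+24+24+4=110$ parameters and $65$ output coefficients. The Jacobian columns can be obtained by forward propagation of polynomial derivatives in two variables.

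\textbf{Non-filling of subarchitectures.} This is the main obstacle. Parameter counts do not suffice: even after accounting for the rescaling symmetries (subtracting $\sum_{i=1}^{L-1}d_i$), every $\mathbf d-e_i$ has expected dimension at least $65$. For instance $\mathbf d$ itself has expected dimension $110-26=84$. So each subarchitecture must be \emph{defective}, and non-filling requires a genuine upper bound. Here I would use recursive dimension bounds. The network factors as a composition at any layer $k$: the map $\RR^{2}\to\RR^{d_k}$ of degree $2^{k-1}$ produces $d_k$ binary forms of degree $2^{k-1}$, which are then fed into the tail network $(d_k,\dots,1)$. Hence
\[
\dim\mathcal V_{\mathbf d'}\le \dim \mathcal V_{(2,d_1',\dots,d_k')}+\dim\mathcal V_{(d_k',\dots,d_{L-1}',1)}-(\text{overlap from }GL\text{-type symmetry at layer }k).
\]
A cruder bound also works: since the first $k$ layers output forms in a space of dimension $\min(d_k'(2^{k-1}+1),\dots)$ and the network's outputs lie in the span of these, the image is contained in a set whose dimension is at most the dimension of the layer-$k$ factor plus the tail's parameter count modulo symmetries. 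The key tool is the observation that for a binary input, layer $k$ can only carry about $2^{k-1}+1$ independent forms. A narrow layer such as $4$ at position $4$ (degree $8$ forms, $4$ of them) forces the composition through $\operatorname{Gr}$-like data. Note also that $d_1=3=\dim\operatorname{Sym}_2(\RR^2)$ is exactly saturating, so reducing $d_1$ to $2$ makes layer one a linear change of coordinates followed by squaring, which loses one parameter.

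For each of the seven subarchitectures I would pick the cut layer giving the sharpest bound and verify that it is $\le 64$. Where the bound is not sharp enough, I would fall back on symbolic computation. This means computing the Jacobian rank at many random points, which gives strong heuristic evidence but not proof. For a rigorous upper bound I would instead exhibit an explicit polynomial relation, or a nonzero vector in the kernel of the Jacobian transpose that is valid identically. Alternatively, I would compute the generic rank symbolically via an exact computation over the function field, for example by showing that all $65\times 65$ minors vanish using structured rank arguments from the layer factorization. I expect that choosing the right factorization for the subarchitectures with a reduced middle width, $(2,3,4,4,4,6,4,1)$ and $(2,3,4,5,4,5,4,1)$, will be the hardest part of the proof.
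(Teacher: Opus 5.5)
Your filling half is the paper's argument (the paper exhibits integer weights and the prime $p=101$ giving Jacobian rank $65$). Your reduction to maximal subarchitectures is also correct, though there are six of them ($i=1,\dots,6$), not seven.

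The gap is the non-filling half. You leave it as a plan, and the plan as stated would not go through.

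First, the correction term in the recursive bound has to be pinned down. The available correction is exactly $d_k$ (Lemma~\ref{Lemma:Recursive}). It comes from the rescalings $W_k\mapsto DW_k$, $W_{k+1}\mapsto W_{k+1}D^{-2}$ with $D$ invertible diagonal. The coordinatewise square does not commute with general $GL_{d_k}$ changes of basis, because $\sigma(Az)$ is linear in $\sigma(z)$ only for monomial $A$. So nothing ``$GL$-type'' can be subtracted. The $-d_k$ terms are also not cosmetic: without them, all four two-cut bounds below exceed $65$.

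Second, you propose picking a single cut layer, and this fails for four of the six subarchitectures. Bound each piece by the smaller of its ambient dimension and its expected dimension. For $\mathbf s_3=(2,3,4,4,4,6,4,1)$ the six single cuts then give $77,74,70,67,74,77$. For each of $\mathbf s_2,\mathbf s_3,\mathbf s_5,\mathbf s_6$ the best single cut (at layer $4$) gives $36+35-4=67\ge 65$. You would then fall back on your alternatives, which do not close the gap. Jacobian ranks at random points bound the dimension only from below. The ``all $65\times 65$ minors vanish'' route is not specified.

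The missing step is to iterate Lemma~\ref{Lemma:Recursive}, cutting at two layers at once. Bound each piece by its own ambient dimension, or by $mn$ for a single linear layer $(m,n)$. No computation is needed for these upper bounds; the paper quotes computed fillingness of the pieces, but only the trivial ambient bounds are used. Concretely:
\begin{itemize}
\item $(2,3,3)\,|\,(3,5,4)\,|\,(4,6,4,1)$ gives $\dim\mathcal V_{\mathbf s_2}\le 9+24+35-3-4=61$;
\item $(2,3,4,4)\,|\,(4,4)\,|\,(4,6,4,1)$ gives $\dim\mathcal V_{\mathbf s_3}\le 20+16+35-4-4=63$;
\item $(2,3,4,5,4)\,|\,(4,5)\,|\,(5,4,1)$ gives $\dim\mathcal V_{\mathbf s_5}\le 36+20+15-4-5=62$;
\item $(2,3,4,5,4)\,|\,(4,6,3)\,|\,(3,1)$ gives $\dim\mathcal V_{\mathbf s_6}\le 36+30+3-4-3=62$.
\end{itemize}
A single cut at the width-$3$ layer handles $\mathbf s_4$: $27+15-3=39$.

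For $\mathbf s_1$, a single cut at layer $1$ already gives $4+33-2=35$; the paper uses the two-cut bound $4+20+35-2-4=53$. This also shows that your remark that shrinking $d_1$ to $2$ ``loses one parameter'' badly understates the drop.

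This two-cut bookkeeping is the paper's proof. In particular, your two anticipated hardest cases, $\mathbf s_3$ and $\mathbf s_5$, need no explicit polynomial relation or symbolic generic-rank computation.
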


The architecture $\mathbf{d}=(2,3,4,5,4,6,4,1)$ is clearly not unimodal. We are tasked with showing it is minimal filling.
{Our approach is to prove (i) the neurovariety $\mathcal{V}_{\mathbf{d},2}$ is filling 
in Lemma~\ref{lemma:counterfill}
and (ii) no subarchitecture is filling in Lemma~\ref{lemma:combined}. 
Our proofs involve computing and upper-bounding the dimension of neurovarieties.
}

\begin{lemma}\label{lemma:counterfill}
    For $\mathbf{d}$ 
    as in Theorem \ref{theorem:counterexample}, $\mathcal{V}_{\mathbf{d},2}=\operatorname{Sym}_{2^6}(\mathbb{R}^2)\cong \mathbb{R}^{65}$.
\end{lemma}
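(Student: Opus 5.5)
We want to show that the parameter map $\parameterMap{\mathbf{d},2}$ for $\mathbf{d}=(2,3,4,5,4,6,4,1)$ is dominant onto $\operatorname{Sym}_{64}(\mathbb{R}^2)\cong\mathbb{R}^{65}$. Since $\mathcal{V}_{\mathbf{d},2}$ is an irreducible variety contained in this $65$-dimensional space, it suffices to show $\dim \mathcal{V}_{\mathbf{d},2}=65$. By the theorem on the generic rank of a polynomial map, this dimension equals the rank of the Jacobian of $\parameterMap{\mathbf{d},2}$ at a generic parameter $\theta$. Moreover, the rank at any single point is a lower bound for the generic rank. So the plan is to exhibit one explicit parameter $\theta_0$ at which the Jacobian $d\parameterMap{\mathbf{d},2}(\theta_0)$ has rank $65$.

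First, the count is plausible. The number of parameters is
\[
\sum_{i=1}^{7} d_id_{i-1}=6+12+20+20+24+24+4=110 .
\]
The standard rescaling symmetries, $W_i\mapsto \operatorname{diag}(\lambda)W_i$ with $W_{i+1}\mapsto W_{i+1}\operatorname{diag}(\lambda)^{-2}$, remove $3+4+5+4+6+4=26$ dimensions. This leaves an expected dimension of $84\ge 65$, so filling is not excluded by counting.

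The concrete step is then:
\begin{enumerate}[label=(\arabic*)]
\item Choose $\theta_0$ with random small integer (or rational) entries.
\item Compute $p_{\theta_0}$ symbolically via the layer composition. Here $x\mapsto W_1x$ gives linear forms, and each subsequent layer squares the entries and applies $W_{i+1}$, ending with one binary form of degree $64$.
\item Differentiate with respect to all $110$ weights. Each partial derivative is again a binary form of degree $64$, obtained by the chain rule through the layers. For example, $\partial p/\partial (W_i)_{jk}$ equals the back-propagated linear functional applied to $\sigma'$ terms, all computable exactly over $\mathbb{Q}$.
\item Arrange the $110$ coefficient vectors in $\mathbb{Q}^{65}$ into a $110\times 65$ matrix and certify that its rank is $65$.
\end{enumerate}

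Exact rational arithmetic, or arithmetic modulo a large prime, gives a rigorous certificate. Rank over $\mathbb{F}_p$ of the reduced integer matrix is a lower bound for the rank over $\mathbb{Q}$, so a rank-$65$ computation mod $p$ suffices.

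The main obstacle is purely computational and numerical. Degree-$64$ forms have enormous coefficients, and floating-point rank estimation would be unreliable. Working modulo a prime sidesteps this, and the only risk is an unlucky choice of $\theta_0$ or $p$ that drops the rank, which I would handle by retrying.

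If a more conceptual route is wanted, I would try to reduce the problem recursively. One observation is that the last layer gives $p=\sum_{j=1}^4 c_j q_j^2$ with $q_j$ the outputs of the subarchitecture $(2,3,4,5,4,6,4)$. One could then use a Terracini-type argument on the tangent spaces $\langle q_j\,\cdot\, T_{q_j}\rangle$. However, that requires knowing the tangent spaces of the intermediate neurovarieties, which is again a Jacobian computation, so I would rely on the explicit certified rank computation.
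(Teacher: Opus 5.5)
Your proposal is correct and follows the paper's proof. The paper likewise reduces the claim to $\dim\mathcal{V}_{\mathbf{d},2}=65$ via the generic-rank theorem, uses that the rank modulo a prime at an integer point is a lower bound, and certifies rank $65$ by a backpropagation computation of the Jacobian at explicit integer weights modulo $p=101$ (listed in Appendix~\ref{appendix:parameterchoices}). Your parameter count ($110$ weights, $84$ after removing the $26$ rescalings) is a correct sanity check that the paper does not need.
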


\begin{proof} 
{It suffices to show $\dim \mathcal{V}_{\mathbf{d},2}=65$, the dimension of the ambient space.}
For more background, the reader {can} refer to Appendix \ref{appendix:A}.
We consider the Jacobian of the parameter map $\operatorname{Jac}\parameterMap{\mathbf{d},r}$. 
By  \Cref{appendix:theorem1}, 
\[
\operatorname{rank}(\operatorname{Jac}\parameterMap{\mathbf{d},r}(\theta))\leq\dim \mathcal{V}_{\mathbf{d},r}\] 
for every 
$\theta\in \RR^{d_1\times d_0}\times\dots\times \RR^{{d_\depth}\times d_{\depth-1}}$. 
By Corollary \ref{appendix:corollary-integer-coefficients}
and the fact that $\parameterMap{\mathbf{d},r}$ is given by polynomials with integer coefficients, 
\begin{align*}
\operatorname{rank}(\operatorname{Jac}\Psi_{\mathbf{d},r}(\theta)\bmod p) & \leq \dim \mathcal{V}_{\mathbf{d},r}\\
& \leq\dim \ambientSpace
\end{align*}
where $p$ is a prime and $\theta\in \mathbb{Z}^{d_1\times d_0}\times\dots\times \mathbb{Z}^{d_{\depth}\times d_{\depth-1}}$. 
Using the backpropagation algorithm as implemented in \cite{KLW} we found\footnote{Our choice of parameters and prime are in Appendix~\ref{appendix:parameterchoices}.}
 a prime $p$ and $\theta$ for which 
\[
\operatorname{rank}(\operatorname{Jac}\parameterMap{\mathbf{d},r}(\theta)\bmod p)=65.
\]
Since the ambient space also has dimension $65$,
we conclude $\dim\mathcal{V}_{\mathbf{d},r}=65$.
\end{proof}

To show minimality, we utilize the following two lemmas.

\begin{lemma}\cite{KLW}\label{lem:MFA-closed-under-increased-hidden-widths}
{Fix $r\geq 1$.}
Let $\mathbf{d}=(d_0,\dots,d_\depth)$ and $\mathbf{d}'=(d_0,\dots,d_i',\dots,d_\depth)$ be two architectures except they differ in exactly the $i$th entry for $0<i<\depth$. Assume that $d_i'\leq d_i$. Then,
\[
\mathcal{F}_{\mathbf{d}'}\subseteq\mathcal{F}_{\mathbf{d}}\qquad\&\qquad \mathcal{V}_{\mathbf{d}'}\subseteq \mathcal{V}_{\mathbf{d}}.
\]
In particular, $\dim(\mathcal{V}_{\mathbf{d}'})\leq \dim \left(\mathcal{V}_{\mathbf{d}}\right)$. 
\end{lemma}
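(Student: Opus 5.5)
The plan is to prove the inclusion $\mathcal{F}_{\mathbf{d}'}\subseteq\mathcal{F}_{\mathbf{d}}$ directly, by padding the narrower network with zero weights. The inclusion of neurovarieties and the dimension inequality then follow formally.

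Fix parameters $\theta'=(W_1',\dots,W_\depth')$ for $\mathbf{d}'$, so that $W_i'\in\RR^{d_i'\times d_{i-1}}$ and $W_{i+1}'\in\RR^{d_{i+1}\times d_i'}$. We build parameters $\theta$ for $\mathbf{d}$ by keeping $W_j=W_j'$ for $j\neq i,i+1$, and setting
\[
W_i=\begin{bmatrix} W_i'\\ 0\end{bmatrix}\in\RR^{d_i\times d_{i-1}},\qquad W_{i+1}=\begin{bmatrix} W_{i+1}' & 0\end{bmatrix}\in\RR^{d_{i+1}\times d_i},
\]
where we append $d_i-d_i'$ zero rows to $W_i'$ and $d_i-d_i'$ zero columns to $W_{i+1}'$.

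Now let $z\in\RR^{d_{i-1}}$ be the input to layer $i$, which is the same for both networks. Then $W_iz=(W_i'z,0)$. Since $\sigma$ acts entrywise and $0^r=0$ for $r\geq 1$, we get $\sigma(W_iz)=(\sigma(W_i'z),0)$. Consequently $W_{i+1}\sigma(W_iz)=W_{i+1}'\sigma(W_i'z)$. Every later layer is identical in the two networks, so $p_\theta=p_{\theta'}$ as polynomial maps. This shows $\mathcal{F}_{\mathbf{d}'}\subseteq\mathcal{F}_{\mathbf{d}}$.

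For the varieties, both functional spaces sit in the same ambient space $\ambientSpace$, because $d_0$, $d_\depth$ and $\depth$ are unchanged. Zariski closure is monotone under inclusion, so $\mathcal{V}_{\mathbf{d}'}=\overline{\mathcal{F}_{\mathbf{d}'}}\subseteq\overline{\mathcal{F}_{\mathbf{d}}}=\mathcal{V}_{\mathbf{d}}$. Finally, the dimension of a closed subvariety is at most that of the ambient irreducible variety, which gives $\dim\mathcal{V}_{\mathbf{d}'}\leq\dim\mathcal{V}_{\mathbf{d}}$.

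There is no real obstacle here. The only points needing care are that the activation fixes $0$, which uses $r\geq 1$, and that the hypothesis $0<i<\depth$ is what makes both $W_i$ and $W_{i+1}$ exist and keeps the input and output widths fixed.
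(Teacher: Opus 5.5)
Your zero-padding argument is correct and is the standard proof of this inclusion: padding $W_i'$ with zero rows and $W_{i+1}'$ with zero columns shows that every $p_{\theta'}$ equals some $p_\theta$, and the statements about closures and dimension then follow formally. The paper gives no proof of its own and simply cites \cite{KLW}; as a minor remark, you do not actually need $0^r=0$, because the zero columns of $W_{i+1}$ already cancel the outputs of the extra neurons whatever the extra rows of $W_i$ are.
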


By induction, one can generalize Lemma \ref{lem:MFA-closed-under-increased-hidden-widths} in the obvious manner: if $\mathbf{d}'\leq \mathbf{d}$ with $d_0'=d_0$ and $d_\depth'=d_\depth$, then $\dim\left(\mathcal{V}_{\mathbf{d}'}\right)\leq \dim \left(\mathcal{V}_{\mathbf{d}}\right)$.

\begin{lemma}\label{Lemma:Recursive}\cite{KTB} Fix $r\geq 1$ and let $k\in\{1,\dots,\depth-1\}$. Then,
    \[
    \dim \mathcal{V}_{(d_0,\dots,d_\depth)}\leq \dim\mathcal{V}_{(d_0,\dots,d_k)}+\dim\mathcal{V}_{(d_k,\dots,d_\depth)}-d_k.
    \]
\end{lemma}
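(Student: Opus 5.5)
The plan is to realize $\mathcal{V}_{(d_0,\dots,d_\depth)}$ as (the closure of) the image of a composition map out of $\mathcal{V}_{(d_0,\dots,d_k)}\times\mathcal{V}_{(d_k,\dots,d_\depth)}$. The $-d_k$ term will come from a $d_k$-dimensional torus of rescalings along whose orbits this composition map is constant.

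\textbf{Step 1 (factor the network).} For $\theta=(W_1,\dots,W_\depth)$ set $f=W_k\sigma W_{k-1}\cdots\sigma W_1 x$, so that $f\in\mathcal{F}_{(d_0,\dots,d_k)}$ is a $d_k$-tuple of forms of degree $r^{k-1}$. Set $h(z)=W_\depth\sigma\cdots\sigma W_{k+1}z$, so that $h\in\mathcal{F}_{(d_k,\dots,d_\depth)}$. Then $p_\theta=h\circ\sigma\circ f$. Define
\[
\Phi:\operatorname{Sym}_{r^{k-1}}(\RR^{d_0})^{\oplus d_k}\times\operatorname{Sym}_{r^{\depth-k-1}}(\RR^{d_k})^{\oplus d_\depth}\to\ambientSpace,\qquad (f,h)\mapsto h\circ\sigma\circ f.
\]
This is a polynomial map. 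Its restriction to $\mathcal{F}_{(d_0,\dots,d_k)}\times\mathcal{F}_{(d_k,\dots,d_\depth)}$ surjects onto $\mathcal{F}_{(d_0,\dots,d_\depth)}$. By continuity, $\mathcal{V}_{(d_0,\dots,d_\depth)}$ is the Zariski closure of $\Phi(X)$, where $X=\mathcal{V}_{(d_0,\dots,d_k)}\times\mathcal{V}_{(d_k,\dots,d_\depth)}$, and $X$ is irreducible of dimension equal to the sum of the two dimensions. I will pass to complex points throughout, which does not change dimensions of these varieties defined over $\mathbb{Q}$.

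\textbf{Step 2 (invariance under a torus).} Let $T=(\mathbb{C}^*)^{d_k}$ act on $X$ by $\Lambda\cdot(f,h)=(\Lambda f,\ h\circ\Lambda^{-r})$, with $\Lambda$ viewed as an invertible diagonal matrix. This action preserves $X$: on parameters it is $W_k\mapsto\Lambda W_k$ and $W_{k+1}\mapsto W_{k+1}\Lambda^{-r}$, so it preserves the functional spaces and hence their closures. Since $\sigma$ is entrywise, $\sigma(\Lambda y)=\Lambda^r\sigma(y)$. Therefore $\Phi(\Lambda\cdot(f,h))=h(\Lambda^{-r}\Lambda^{r}\sigma(f))=\Phi(f,h)$, so every fiber of $\Phi|_X$ is a union of $T$-orbits.

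\textbf{Step 3 (orbits are $d_k$-dimensional; the main obstacle).} The point to check is that a generic point of $X$ has finite stabilizer, so that its orbit has dimension $d_k$. Take $W_1,\dots,W_k$ generic. Then every coordinate $f_j$ of $f$ is a nonzero polynomial: for instance, choose all weights positive and evaluate at a positive $x$. If $\Lambda f=f$, then $\lambda_j f_j=f_j$ for each $j$, which forces $\lambda_j=1$. So the stabilizer is trivial on a nonempty Zariski-open subset of $X$, and the orbits there have dimension $d_k$.

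\textbf{Step 4 (conclude).} By the fiber dimension theorem applied to the dominant map $\Phi:X\to\mathcal{V}_{(d_0,\dots,d_\depth)}$, generic fibers have dimension $\dim X-\dim\mathcal{V}_{(d_0,\dots,d_\depth)}$. By Steps 2 and 3, a generic fiber contains a $d_k$-dimensional orbit. Hence
\[
\dim\mathcal{V}_{(d_0,\dots,d_\depth)}\le\dim X-d_k=\dim\mathcal{V}_{(d_0,\dots,d_k)}+\dim\mathcal{V}_{(d_k,\dots,d_\depth)}-d_k,
\]
which is the claimed inequality.
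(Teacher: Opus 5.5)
The paper does not prove this lemma itself; it imports it from \cite{KTB}. Your argument is correct and is essentially the standard one behind that bound: realize $\mathcal{V}_{(d_0,\dots,d_\depth)}$ as the closure of the image of $(f,h)\mapsto h\circ\sigma\circ f$ on $\mathcal{V}_{(d_0,\dots,d_k)}\times\mathcal{V}_{(d_k,\dots,d_\depth)}$, and use the $d_k$-dimensional rescaling $W_k\mapsto\Lambda W_k$, $W_{k+1}\mapsto W_{k+1}\Lambda^{-r}$ to force generic fibers of dimension at least $d_k$. (In Step 4, one line is worth adding: the trivial-stabilizer open set meets $\Phi^{-1}(U)$, where $U$ is the open set over which fibers have dimension $\dim X-\dim\mathcal{V}_{(d_0,\dots,d_\depth)}$, because $\Phi$ is dominant and $X$ is irreducible.)
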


By Lemma~\ref{lem:MFA-closed-under-increased-hidden-widths}, 
$\mathbf{d}$ is minimal filling
if the maximal subarchitectures obtained by decreasing a single hidden-layer width by one are not filling. These maximal subarchitectures are listed in \cref{table:maximal-non-filling-depth-7}. The suggested dimensions of these neurovarieties are obtained via finite field computations.

{We will utilize the fillingness of certain architectures which are collected in the following proposition. Its proof follows exactly the same computational methods as Lemma \ref{lemma:counterfill}.}

\begin{proposition}\label{Proposition:ComputerVerification}
    For $r=2$, the following architectures are filling: $(2,4,5,4)$, $(2,3,3)$, $(2,3,4,4)$, $(2,3,4,5,3)$, $(3,6,4,1)$, $(2,3,4,5,4)$, and $(4,6,3)$.
\end{proposition}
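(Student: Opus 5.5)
For each of the seven architectures listed in Proposition~\ref{Proposition:ComputerVerification}, I would certify fillingness with the same certificate used in Lemma~\ref{lemma:counterfill}: exhibit one integer parameter point whose Jacobian, reduced modulo a prime, has full rank. The chain of inequalities is
\[
\operatorname{rank}(\operatorname{Jac}\parameterMap{\mathbf{d},2}(\theta)\bmod p)\leq\dim\mathcal{V}_{\mathbf{d},2}\leq \dim\ambientSpace .
\]
The first inequality comes from \Cref{appendix:theorem1} together with Corollary~\ref{appendix:corollary-integer-coefficients}, and uses that $\parameterMap{\mathbf{d},2}$ has integer coefficients. If the left-hand side equals the ambient dimension, then $\mathcal{V}_{\mathbf{d},2}$ is filling.

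The first step is to tabulate the target ranks from the formula $d_\depth\binom{d_0+2^{\depth-1}-1}{2^{\depth-1}}$:
\begin{itemize}[leftmargin=*]
\item $(2,3,3)$: $3\cdot\binom{3}{2}=9$.
\item $(2,4,5,4)$ and $(2,3,4,4)$: $4\cdot\binom{5}{4}=20$.
\item $(2,3,4,5,3)$: $3\cdot 9=27$.
\item $(2,3,4,5,4)$: $4\cdot 9=36$.
\item $(3,6,4,1)$: $\binom{6}{4}=15$.
\item $(4,6,3)$: $3\cdot\binom{5}{2}=30$.
\end{itemize}
Before computing, I would check the parameter counts, since a necessary condition is $\sum_i d_id_{i-1}-\sum_{i=1}^{\depth-1} d_i\geq$ ambient dimension. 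For example, $(2,3,3)$ has $6+9-3=12\geq 9$, and $(3,6,4,1)$ has $18+24+4-10=36\geq 15$. These all hold, so no case is ruled out in advance.

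The second step is to sample random small integer weights $\theta$ for each architecture and pick a large prime $p$, say around $2^{31}$. I would then form the Jacobian of $\theta\mapsto p_\theta$ in the monomial coordinates of $\operatorname{Sym}_{2^{\depth-1}}(\RR^{d_0})^{\oplus d_\depth}$. The most convenient way is the backpropagation implementation of \cite{KLW}: differentiate the layerwise composition with respect to each weight entry and expand each partial derivative in monomials. Finally, compute the rank over $\mathbb{F}_p$ by Gaussian elimination and confirm that it equals the target above. I would record the chosen $\theta$ and $p$ in the appendix, as was done for Lemma~\ref{lemma:counterfill}, so the certificate can be checked independently.

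The main obstacle is not conceptual but the possibility of accidental rank drops. A poor choice of $\theta$, or a prime dividing some minor, can lower the rank, and such a drop alone proves nothing. If the rank falls short, I would resample $\theta$ and change $p$. Since the generic rank is attained on a Zariski-open set, a few random trials should succeed whenever the architecture really is filling. If a drop persists across many trials, that would suggest a genuine defect. It would then require the upper-bound methods of Lemma~\ref{Lemma:Recursive} rather than this certificate, but I do not expect that for these small cases.
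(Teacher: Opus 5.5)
Your proposal is correct and follows the paper's approach: the paper proves this proposition with the same certificate as Lemma~\ref{lemma:counterfill}, namely an integer parameter $\theta$ and a prime $p$ for which $\operatorname{rank}(\operatorname{Jac}\parameterMap{\mathbf{d},2}(\theta)\bmod p)$ equals the ambient dimension, computed with the backpropagation implementation of \cite{KLW}. Your target ranks $9,20,20,27,36,15,30$ also match the values used in the proof of Lemma~\ref{lemma:combined}.
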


{The recursive bounds in Lemma~\ref{Lemma:Recursive} involve intermediate architectures whose filling status must be determined. Proposition~\ref{Proposition:ComputerVerification}
addresses the cases needed for the proof of the next lemma.}
\begin{lemma}\label{lemma:combined}
    The ambient space is $65$ dimensional. The subarchitectures $\mathbf{s}_i$ for $1\leq i\leq 6$ as described in Table \ref{table:maximal-non-filling-depth-7} have $\dim(\mathcal{V}_{\mathbf{s}_i,2})$ satisfying the bounds in the third column of the table.
\end{lemma}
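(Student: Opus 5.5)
The plan is to treat each of the six maximal subarchitectures $\mathbf{s}_i$, obtained from $\mathbf{d}=(2,3,4,5,4,6,4,1)$ by lowering one hidden width by one, separately. For each I would apply Lemma~\ref{Lemma:Recursive}, possibly repeatedly, at well-chosen cut points $k$, and then bound each factor in one of three ways:
\begin{enumerate}[label=(\alph*)]
\item by its exact dimension, when the factor is one of the filling architectures of Proposition~\ref{Proposition:ComputerVerification}; a filling architecture $(d_0,\dots,d_k)$ with $d_0=2$ has dimension $d_k(2^{k-1}+1)$;
\item by the ambient dimension $d_k\binom{d_0+2^{k-1}-1}{2^{k-1}}$;
\item by the parameter count $\sum d_id_{i-1}-\sum(\text{hidden widths})$, which accounts for the rescaling symmetry of each hidden neuron.
\end{enumerate}
The aim is to show that the resulting bound is strictly less than $65$. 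The suggested values in the table, computed by Jacobian ranks mod $p$ as in Lemma~\ref{lemma:counterfill}, only serve as a guide to how much room is available.

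A prototypical case is $\mathbf{s}_6=(2,3,4,5,4,6,3,1)$. Its naive parameter count is $78>65$, so a single split will not do. I would first cut at $k=4$. The head $(2,3,4,5,4)$ is filling by Proposition~\ref{Proposition:ComputerVerification}, so its dimension is $4\cdot 9=36$. The tail $(4,6,3,1)$ is then cut again at $k=2$. Here $(4,6,3)$ is filling of dimension $3\binom{5}{2}=30$ and $(3,1)$ has dimension $3$, which gives $\dim\mathcal V_{(4,6,3,1)}\le 30+3-3=30$. Hence
\[
\dim\mathcal V_{\mathbf{s}_6}\le 36+30-4=62<65.
\]
For the other $\mathbf{s}_i$ I would run the same scheme. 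One peels off a filling head from the list in Proposition~\ref{Proposition:ComputerVerification}, such as $(2,3,3)$, $(2,3,4,4)$, $(2,3,4,5,3)$ or $(2,3,4,5,4)$, chosen to match the shortened layer. One then bounds the tail by further splits that end in short pieces like $(3,6,4,1)$, $(4,6,3)$ or $(d,1)$, whose dimensions are either known from the proposition or are simply parameter counts. The reason this works is that the defect forced by the narrowed layer shows up as the subtracted $d_k$ terms.

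The main obstacle I expect is choosing the cut points. The recursive bound is lossy, and for some $\mathbf{s}_i$ the obvious splits overshoot $65$. The worst case is $\mathbf{s}_1=(2,2,4,5,4,6,4,1)$: cutting at $k=1$ gives $4+65-2$, and cutting at $k=2$ also overshoots. For this case I would first try a direct argument instead of Lemma~\ref{Lemma:Recursive}. After the first layer every output lies in $\operatorname{Sym}_{32}$ of a $2$-dimensional subspace $U\subset\operatorname{Sym}_2(\mathbb{R}^2)$. This subspace varies in a $2$-dimensional Grassmannian, and for each fixed $U$ the space $\operatorname{Sym}_{32}(U)$ has dimension $33$. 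So $\dim\mathcal V_{\mathbf{s}_1}\le 35<65$. If some other $\mathbf{s}_i$ resists every combination of splits, the fallback is to search over all cut sequences, which is a small finite search, using the exact filling dimensions from Proposition~\ref{Proposition:ComputerVerification} wherever they apply.
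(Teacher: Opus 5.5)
Your method is the paper's: iterate Lemma~\ref{Lemma:Recursive} and bound each piece by its ambient dimension. Your $\mathbf{s}_6$ computation, $(2,3,4,5,4)\mid(4,6,3)\mid(3,1)$ giving $36+30+3-7=62$, is exactly the paper's.

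The one real divergence is $\mathbf{s}_1$, and your stated reason for leaving the lemma there is an arithmetic slip. The tail $(2,4,5,4,6,4,1)$ has depth $6$, so it lives in $\operatorname{Sym}_{32}(\mathbb{R}^2)$, whose dimension is $33$, not $65$. The cut at $k=1$ therefore already gives $\dim\mathcal{V}_{\mathbf{s}_1,2}\le 4+33-2=35$. Your Grassmannian count ($2$ for $U$ plus $33$ for $\operatorname{Sym}_{32}(U)$) is precisely this instance of the lemma, redone by hand. It is correct, and it is sharper than what the paper proves. The paper cuts at $k=1$ and $k=4$, bounds $(2,4,5,4)$ and $(4,6,4,1)$ by $20$ and $35$, and obtains only $4+20+35-6=53$. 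Your $35$ equals the reported dimension. Together with the finite-field rank computation behind that value, it would give equality.

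For $\mathbf{s}_2,\dots,\mathbf{s}_5$ you only assert that the scheme works. It does, but a single cut after the head you name overshoots for $\mathbf{s}_2$, $\mathbf{s}_3$ and $\mathbf{s}_5$. For example, for $\mathbf{s}_5$ the best available bound $\dim\mathcal{V}_{(4,5,4,1),2}\le 35$ yields only $36+35-4=67$. So the tail must be cut once more:
\begin{itemize}
\item $(2,3,3)\mid(3,5,4)\mid(4,6,4,1)$ gives $9+24+35-7=61$;
\item $(2,3,4,4)\mid(4,4)\mid(4,6,4,1)$ gives $20+16+35-8=63$;
\item $(2,3,4,5,3)\mid(3,6,4,1)$ gives $27+15-3=39$;
\item $(2,3,4,5,4)\mid(4,5)\mid(5,4,1)$ gives $36+20+15-9=62$.
\end{itemize}
These are the paper's decompositions, and they reproduce the third column of Table~\ref{table:maximal-non-filling-depth-7}.

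Two small remarks. First, your tool (a) adds nothing to (b) for upper bounds, since a filling piece has exactly its ambient dimension. The same is true of the paper's appeal to Proposition~\ref{Proposition:ComputerVerification} in this proof. Second, the table entry for $\mathbf{s}_4$ is an equality, while the recursive bound gives only $\le 39$. The matching lower bound has to come from the mod-$p$ Jacobian rank certificate of Appendix~\ref{appendix:A}. So at that entry the reported value is a certificate, not merely a guide.
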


\begin{proof}
Recall $r=2$. For an architecture $\mathbf{d}$, let 
$D_{\mathbf{d}} =\dim\left(\mathcal{V}_{\mathbf{d}}\right)$. 
Note that $\dimV_{\mathbf{d}}$ is always bounded above by the dimension of the ambient space $\dimV_{\mathbf{d}}\leq d_{\depth}\binom{d_0+r^{\depth-1}-1}{r^{\depth-1}}$.  

Using the recursive bound in Lemma \ref{Lemma:Recursive},
\begin{align*}
    \dimV_{\mathbf{s}_1} & \leq \dimV_{(2,2)}+\dimV_{(2,4,5,4)}+\dimV_{(4,6,4,1)}-2-4.
\end{align*}
Proposition \ref{Proposition:ComputerVerification} says $(2,4,5,4)$ is a filling architecture with dimension $20$, and with 
the dimension of the ambient space being $35$ we have $\dimV_{(4,6,4,1)}\leq 35$. {It follows} $\dimV_{\mathbf{s}_1} \leq 4+20+35-2-4=53.$

Similarly, the recursive bound gives these inequalities:
\begin{align*}
    \dimV_{\mathbf{s}_2} & \leq \dimV_{(2,3,3)}+\dimV_{(3,5,4)}+\dimV_{(4,6,4,1)}-7\\
      \dimV_{\mathbf{s}_3} & \leq \dimV_{(2,3,4,4)}+\dimV_{(4,4)}+\dimV_{(4,6,4,1)}-8
      \\
  \dimV_{\mathbf{s}_4} & \leq \dimV_{(2,3,4,5,3)}+\dimV_{(3,6,4,1)}-3\\
\dimV_{\mathbf{s}_5} & \leq \dimV_{(2,3,4,5,4)}+\dimV_{(4,5)}+\dimV_{(5,4,1)}
    -9\\
\dimV_{\mathbf{s}_6} & \leq \dimV_{(2,3,4,5,4)}+\dimV_{(4,6,3)}+\dimV_{(3,1)}-7.
\end{align*}
Using Proposition \ref{Proposition:ComputerVerification} and $D_{(m,n)}=mn$, this simplifies to:
\begin{align*}
    \dimV_{\mathbf{s}_2} & \leq 9+24+35-7&\leq 61 & \\
      \dimV_{\mathbf{s}_3} & \leq 20+16+35-8&\leq 63
      \\
  \dimV_{\mathbf{s}_4} & \leq 27+15-3&\leq 39\\
\dimV_{\mathbf{s}_5} & \leq 36+20+15-9 & \leq 62\\
\dimV_{\mathbf{s}_6} & \leq 36+30+3-7&\leq 62 &.
\end{align*}
Each of these dimensions is strictly less than the ambient dimension 65, hence none are filling.
\end{proof}

\begin{table}[t]
  \caption{
  Bounds proved in Lemma~\ref{lemma:combined} and reported $\dim \mathcal{V}_{\mathbf{s}_i,r}$.}
    \vskip -0.1in
  \label{table:maximal-non-filling-depth-7}
   \begin{center}
    \begin{small}
      \begin{sc}
        \begin{tabular}{lcccr}
          \toprule
          Architecture  & 
          Reported & Proven \\
           & Dimension & Bound\\
          \midrule 
          $\mathbf{s}_1=(2,{\color{red}{2}},4,5,4,6,4,1)$    & $35$ & $\leq 53$ \\
          $\mathbf{s}_2=(2,3,{\color{red}{3}},5,4,6,4,1)$ & $60$ & $\leq 61$\\
          $\mathbf{s}_3=(2,3,4,{\color{red}{4}},4,6,4,1)$ & $62$ & $\leq 63$ \\
          $\mathbf{s}_4=(2,3,4,5,{\color{red}{3}},6,4,1)$ & $39$ & $=39$\\
          $\mathbf{s}_5=(2,3,4,5,4,{\color{red}{5}},4,1)$ & $61$ & $\leq 62$ \\
          $\mathbf{s}_6=(2,3,4,5,4,6,{\color{red}{3}},1)$ & $59$ & $\leq 62$\\
          \bottomrule
        \end{tabular}
      \end{sc}
    \end{small}
  \end{center}
  \vskip -0.1in
\end{table}

\setlength{\textfloatsep}{8pt}% Remove \textfloatsep
\begin{algorithm}[t]
  \caption{Frontier search for hidden layers' widths
  }  
  \label{alg:frontier-search}
  \begin{algorithmic}[1]
    \STATE {\bfseries Input:} depth $\depth$, fixed endpoints $d_0,d_\depth$, widths range $[m,n]$, exponent $r$, integer $B$, seed $\mathsf{seed}$
    \STATE {\bfseries Initialize:} $\mathcal{F}_{\min}\leftarrow\emptyset$ \hfill (minimal filling antichain)
    \STATE {\bfseries Initialize:} $\mathcal{N}_{\max}\leftarrow\emptyset$ \hfill (maximal non-filling antichain)
    \STATE {\bfseries Initialize:} $\mathsf{rng}\leftarrow \mathsf{seed}$

    \FOR{$t=1$ {\bfseries to} $B$}
      \STATE Propose a candidate $\mathbf{a}\in\{m,\dots,n\}^{\depth-1}$ using $\mathsf{rng}$ 
      \IF{$\exists\,\mathbf{f}\in\mathcal{F}_{\min}\ \text{with}\ \mathbf{a}\geq \mathbf{f}$
      }
        \STATE {\bfseries continue} \hfill (already implied filling)
      \ENDIF
      \IF{$\exists\,\mathbf{q}\in\mathcal{N}_{\max}\ \text{with}\ \mathbf{a}\leq \mathbf{q}$}
        \STATE {\bfseries continue} \hfill (already implied non-filling)
      \ENDIF

      \IF{$(d_0,\mathbf{a},d_\depth)$ is filling}
        \STATE $\mathcal{F}_{\min}\leftarrow \mathcal{F}_{\min}\cup\{\mathbf{a}\}$ 
        \hfill (keep minimal) 
        \STATE Remove any $\mathbf{f}\in\mathcal{F}_{\min}$ with $\mathbf{f}> \mathbf{a}$ 
      \ELSE
        \STATE $\mathcal{N}_{\max}\leftarrow \mathcal{N}_{\max}\cup\{\mathbf{a}\}$ \hfill (keep maximal) 
        \STATE Remove any $\mathbf{q}\in\mathcal{N}_{\max}$ with $\mathbf{q}< \mathbf{a}$ 
      \ENDIF
    \ENDFOR

    \STATE {\bfseries Output:} $\mathcal{F}_{\min}$
  \end{algorithmic}
\end{algorithm}

\section{Search Algorithm}

Our algorithm for finding MFAs exploits Lemma \ref{lem:MFA-closed-under-increased-hidden-widths} and the definition of minimality. The pseudocode is provided in Algorithm \ref{alg:frontier-search}. 

The difficulty in finding MFAs is due to the size of the search space.

To efficiently 
find MFAs $(d_0,\mathbf{a},d_\depth)$, we employ a frontier search over $\mathbf{a}=(d_1,\dots,d_{\depth-1})\in [m,n]^{\depth-1}$. 
We first prune the search space using naive dimension bounds, e.g.,
$\dim\left(\mathcal{V}_{\mathbf{d},r}\right)\leq\leftExpDimension$.
If there were any previous searches already performed, then we prune the search space using the results. Then, we repeatedly test $(d_0,\mathbf{a},d_\depth)$ from among the remaining possibilities for filling architectures $(d_0,\mathbf{a},d_\depth)$:

\begin{itemize}[nosep]
    \item If $(d_0,\mathbf{a},d_\depth)$ is filling, we prune all $\mathbf{a}'> \mathbf{a}$.
    \item If $(d_0,\mathbf{a},d_\depth)$ is not filling, we prune all $\mathbf{a}'< \mathbf{a}$.
\end{itemize}

Eventually this terminates to a finite set of candidates. We verify each candidate by computing the dimension of its subarchitectures. 
If verification suggests a candidate is not minimal, then we adjust our $m$ and $n$ and run the search again. Utilizing previous search history to inform subsequent searches allowed us to quickly prune the search space.

\begin{remark}
To determine if 
$(d_0,\mathbf{a},d_\depth)$ is filling, our 
implementation of  Algorithm~\ref{alg:frontier-search}, relies on finite field computations and generic choices of parameters. 
Accordingly, the output is correct for almost all primes and parameter choices. We then certify the result a posteriori by proving the relevant dimension bounds, as shown in Section~\ref{s:counterexample}.
\end{remark}

\vspace{-10pt}
\section{Discussion}
\subsection{Minimal Filling Architectures}

Fix $d_0=2$, $d_\depth=1$, and $r=2$. Our search 
found $13$ MFAs of depth $\depth=7$ with only one among them being nonunimodal: the architecture discussed in
Section~\ref{s:counterexample}. For depth $\depth=8$, we found $82$ MFAs and among them, $20$ were nonunimodal
\footnote{The repository \url{https://github.com/daokevin06/MFA_PNNs} includes our search and implementation.
}.

Fix $d_0,d_\depth,r,$ and $\depth$. By Dickson's Lemma, one knows there are finitely many MFAs. 
Finite field computations over $p=2^{31}-1$ strongly suggest that every entry in Table~\ref{exhaustivelist} is minimal filling. A proof would follow the same strategy as Lemma~\ref{lemma:combined}, but with substantially more cases.

A natural next question is whether Table~\ref{exhaustivelist} is exhaustive for $\depth\leq 7$.
Appendix~\ref{appendix:sec:verify} proves that this is indeed the case.
For larger values of $\depth$, however, the problem is substantially more difficult.
At present, there is no general method for proving completeness of such lists, since we do not know, a priori, how large the hidden widths must be in order to guarantee that all minimal filling architectures have been found.

\begin{table}[t]
  \caption{
  MFAs
  for $d_0=2$, $d_\depth=1$, $r=2$, and $\depth\leq 7$.}
  \vspace{-.1in}
  \label{exhaustivelist}
  \begin{center}
    \begin{small}
      \begin{sc}
        \begin{tabular}{lcccr}
          \toprule
          $\depth$  & 
          Minimal Filling Architecture
          \\
          \midrule
          $\depth=2$ & $(2,2,1)$ \\
          $\depth=3$ & $(2,2,2,1)$\\
          $\depth=4$ & $(2,3,3,2,1)$\\
          $\depth=5$ & $(2,3,3,3,2,1)$\\
          $\depth=6$ & $(2,3,3,4,4,2,1)$\\
          & $(2, 3, 4, 5, 4, 6, 4, 1),(2, 3, 4, 5, 6, 4, 2, 1)$\\
          & $(2, 3, 3, 4, 5, 6, 3, 1), (2, 3, 4, 5, 5, 5, 2, 1)$\\
          & $(2, 3, 3, 5, 6, 4, 4, 1),(2, 3, 3, 5, 7, 4, 2, 1)$\\
          $\depth=7$ & $(2, 3, 3, 4, 6, 5, 2, 1),(2, 3, 4, 5, 5, 4, 4, 1)$\\
          & $(2, 3, 4, 4, 5, 5, 4, 1),(2, 3, 3, 6, 6, 4, 3, 1)$\\
          & $(2, 3, 3, 5, 5, 5, 4, 1),(2, 3, 4, 6, 5, 4, 3, 1)$\\
          & $(2, 3, 5, 5, 5, 4, 3, 1)$\\
          \bottomrule
        \end{tabular}
      \end{sc}
    \end{small}
  \end{center}
  \vskip -0.1in
\end{table}

\begin{figure*}[t]
  \centering
  % \documentclass[tikz,border=5pt]{standalone}
% \usepackage{tikz}

% --- Layer color chooser (plain pdflatex compatible) ---
% Define 10 distinct colors; if you have more than 10 layers, we will cycle them.
\newcommand{\LayerColor}[1]{%
  \ifcase#1\relax
  \or red!50%
  \or orange!50%!black%
  \or yellow!50%!black%
  \or brown!50%!black%
  \or magenta!50%
  \or violet!50%
  \or green!50%!black%
  \or blue!50%
  \or teal!50%!black%
  \or cyan!50%!black%
  \else gray!70%
  \fi
}

%\begin{document}

\begin{tikzpicture}[
  neuron/.style={circle, draw=black!80, minimum size=2.8mm, 
  inner sep=2.0pt},
  conn/.style={draw=gray!55, line width=0.25pt, opacity=0.33}
]

% ----- spacing -----
\def\xsep{1.65}   % horizontal separation between layers
\def\ysep{0.37}  % vertical separation within a layer

% ----- architecture (EDIT THIS FREELY) -----
% Examples:
\def\LayerSizes{2,3,4,4,10,17,11,12,4,2}
%\def\LayerSizes{2,3,4,4,10,17,11,12}
%\def\LayerSizes{2,3,4,4,10,17,11,12}

% number of colors in the palette above
\def\NumColors{10}

% ----- PASS 1: draw nodes and store sizes LS1, LS2, ... -----
\foreach \n [count=\L] in \LayerSizes {%
  % Store size in a macro named \LS<layer>
  \expandafter\xdef\csname LS\L\endcsname{\n}%

  % Remember how many layers we have (final L after loop)
  \xdef\NumLayers{\L}%

  % Compute x coordinate for this layer (avoid inline arithmetic in coordinates)
  \pgfmathsetmacro{\x}{(\L-1)*\xsep}%

  % Center this layer vertically
  \pgfmathsetmacro{\offset}{(\n-1)/2}%

  % Choose a color index that cycles through 1..NumColors
  \pgfmathtruncatemacro{\cidx}{mod(\L-1,\NumColors)+1}%
  \edef\thiscolor{\LayerColor{\cidx}}%

  % Draw neurons
  \foreach \i in {1,...,\n} {%
    \pgfmathsetmacro{\y}{(\i-1-\offset)*\ysep}%
    \node[neuron, fill=\thiscolor] (N\L\i) at (\x,\y) {};%
  }%

  % % Label layer size above it (colored to match)
  % \pgfmathsetmacro{\ylabel}{(\offset+1.6)*\ysep}%
  % \node[font=\small, text=\thiscolor] at (\x,\ylabel) {$\n$};%
}

% ----- PASS 2: connect consecutive layers (only if >= 2 layers) -----
\pgfmathtruncatemacro{\LastLayer}{\NumLayers-1}%

\ifnum\NumLayers>1
  \foreach \L in {1,...,\LastLayer} {%
    \pgfmathtruncatemacro{\Lp}{\L+1}%
    \edef\a{\csname LS\L\endcsname}%
    \edef\b{\csname LS\Lp\endcsname}%

    \foreach \i in {1,...,\a} {%
      \foreach \j in {1,...,\b} {%
        \draw[conn] (N\L\i) -- (N\Lp\j);%
      }%
    }%
  }%
\fi

\end{tikzpicture}

% \end{document}
  \caption{A counterexample to the unimodal minimal filling architecture conjecture with $d_\depth=2$ ($r=2$)
    \label{fig:dL-equals-two-counter-example}}
\end{figure*}
\subsection{Defectivity: losing expected expressive power}
In studying the counterexample, we observed that its subarchitectures had large defect, a notion we recall next. Defectivity (for fixed activation degree $r$) is a measure of an architecture's loss in expressivity which cannot be attributed to expected symmetries. 
Presently, there does not appear to be a systematic understanding of which architectures are defective. This motivates the definition.

\begin{definition}
    If $\mathbf{d}=(d_0,\dots,d_\depth)$, then the \demph{expected dimension} of the neurovariety $\mathcal{V}_{\mathbf{d},r}$, denoted~$\expdim\left(\mathcal{V}_{\mathbf{d},r}\right)$, equals the minimum 
    of the dimension of the ambient space and
    $\leftExpDimension$. The \demph{defect} of the neurovariety is defined to be the difference~$\expdim \left(\mathcal{V}_{\mathbf{d},r}\right)-\dim\left(\mathcal{V}_{\mathbf{d},r}\right)$.
\end{definition}

\begin{example}[A counterexample with $d_\depth=2$]\label{ex:counter-2-2}
Using \Cref{alg:frontier-search}, we found a counterexample to Conjecture \ref{conj:ktb} where $d_0=2$, $d_{\depth}=2$, $r=2$, and $\depth=9$. The PNN with architecture $\mathbf{d}=(2, 3, 4, 4, 10, 17, 11, 12, 4, 2)$ is minimal filling, {and is shown in~\Cref{fig:dL-equals-two-counter-example}.}
This can be proved using the same techniques as in our first example.
For $i=1,\dots,\depth-1$, let $\mathbf{s}_i$ denote the architecture obtained by decreasing the $i$th hidden layer of $\mathbf{d}$ by one. 
Then 
\[
\left(\codim(\mathcal{V}_{\mathbf{s}_i,r}
)\right)_{i=1}^{\depth-1}=(254,17,176,5,17,34,11,124).
\]
Moreover, $\expdim(\mathcal{V}_{\mathbf{s}_i,r})=\dim(\ambientSpace)$ holds for each $i$, hence
$\defect(\mathcal{V}_{\mathbf{s}_i,r})=\codim(\mathcal{V}_{\mathbf{s}_i,r})$.
\end{example}

\begin{example}[A counterexample with $r=3$]\label{example:r=3}
A heuristic derived from the main results of \cite{FRWY} is that as $r$ increases, one expects defective architectures to become ``rarer". 
{Using \Cref{alg:frontier-search}, 
we nevertheless found a counterexample to Conjecture \ref{conj:ktb} where  $r=3$.}
The architecture $\mathbf{d}=(2, 3, 7, 6, 15, 8, 1)$ is nonunimodal and minimal filling. {We successfully verified it is a minimal filling architecture using the same techniques as above.} The ambient dimension is $244$, {and} the reported {neurovariety} dimensions are
\[
\left(\operatorname{dim}(\mathcal{V}_{\mathbf{s}_i,r})\right)_{i=1}^{\depth-1}=(84,240,223,233,231).
\]
{So the} reported defects are 
\[
\left(\operatorname{defect}(\mathcal{V}_{\mathbf{s}_i,r})\right)_{i=1}^{\depth-1}=(156,0,4,2,2).
\]
{Unlike~Example~\ref{ex:counter-2-2}, not all of the non-filling architectures $\mathbf{s}_i$ are defective: for some $i$, the expected dimension is already strictly smaller than the ambient dimension.}
{This example shows that maximal subarchitectures of MFAs need not 
{have}
large defect or have expected dimension equal to that of the ambient space.} 
\end{example}

The defects of some architectures in Table \ref{table:maximal-non-filling-depth-7} turn out to be quite large.
These appear to be among the first nontrivial examples of architectures exhibiting large defects.
Of course, one can manufacture arbitrarily large defect by making one of the hidden layers have width one while allowing the remaining layers to be very wide. 
Example~\ref{example:r=3} shows that maximal subarchitectures of an MFA need not be defective, and even when they are defective, their defect need not be large.
This motivates the study of \keyemph{maximal non-filling architectures} and their defect.

It is well known that polynomial activations present a trade-off: low-degree polynomials can limit expressivity, whereas higher-degree polynomials often lead to numerical instability \cite{Zhou2019} and gradient explosion \cite{Goyal2020}. Efforts to address these issues are already present in the literature \cite{Hossain2025}. One natural response is to pair low-degree polynomial activations with architectures that are \textit{a priori} highly expressive; in this context, understanding MFAs for power activations may inform architecture selection.

\subsection{Next steps and new directions}

Our results concern a classical feedforward PNN model with power activation.
An important generalization is to allow broader polynomial activation families, for example activations expressed in orthogonal polynomial bases such as Chebyshev polynomials.
A complementary direction is to investigate the polynomial networks ($\Pi$-nets) in~\cite{CMBDPZ2022} using tools from neuroalgebraic geometry, to obtain a geometric characterization of their expressivity.

Another direction is to compare training dynamics on MFAs versus maximal non-filling architectures. 
Several works have already investigated optimization landscapes and function-space geometry for convolutional neural networks~\cite{KMMT-2022-neu-ag} and  PNNs~\cite{ABKPT-2026-neu-ag} through a neuroalgebraic geometry lens. But a direct comparison between MFA and maximal non-filling architectures is yet to be studied.

A new direction is to systematically characterize the boundary between nonfilling and filling architectures. This means systematically understanding minimal filling architectures. An immediate first step is to find {more} MFAs when $r>2$.

\section*{Acknowledgments}

{We thank Kathl\'en Kohn and Joe Kileel for their comments on this project. We also thank the anonymous reviewers for the feedback that led to improvements in this article.}

KD acknowledges that the material is based upon work supported by the National Science Foundation Graduate
Research Fellowship Program under Grant No. 2137424. Any opinions, findings, and conclusions
or recommendations expressed in this material are those of the authors and do not necessarily
reflect the views of the National Science Foundation.

JIR's research is partially supported by the Alfred P. Sloan Foundation and by the National Science Foundation Grant No. 2510307.

\subsection*{Impact Statement}

This paper presents worked related to the field of Neuroalgebraic Geometry. There are potential societal consequences of our work, none of which we feel must be
specifically highlighted here.

\bibliography{bib-unimodal}

@book {invitation-nonlinear-algebra,
    AUTHOR = {Micha{\l}ek, Mateusz and Sturmfels, Bernd},
     TITLE = {Invitation to nonlinear algebra},
    SERIES = {Graduate Studies in Mathematics},
    VOLUME = {211},
 PUBLISHER = {American Mathematical Society, Providence, RI},
      YEAR = {2021},
     PAGES = {xiii+226},
      ISBN = {978-1-4704-5367-1},
   MRCLASS = {14-02 (13-02 13A50 13P10 14Q15 14T90 15A69 20G05)},
  MRNUMBER = {4423369},
MRREVIEWER = {Anton\ Dochtermann},
       DOI1 = {10.1090/gsm/211},
       URL1 = {https://doi.org/10.1090/gsm/211},
}

@book {metric-ag,
    AUTHOR = {Breiding, Paul and Kohn, Kathl\'en and Sturmfels, Bernd},
     TITLE = {Metric algebraic geometry},
    SERIES = {Oberwolfach Seminars},
    VOLUME = {53},
 PUBLISHER = {Birkh\"auser/Springer, Cham},
      YEAR = {2024},
     PAGES = {xiv+215},
      ISBN = {978-3-031-51461-6; 978-3-031-51462-3},
   MRCLASS = {14Qxx (13Pxx 53Z50 62R01 90C23)},
  MRNUMBER = {4738534},
MRREVIEWER = {Scott McCallum},
       DOI1 = {10.1007/978-3-031-51462-3},
       URL1 = {https://doi.org/10.1007/978-3-031-51462-3},
}

@inproceedings{ICLR2025_259e59fe-neu-ag,
 author = {Henry, Nathan and Marchetti, Giovanni Luca and Kohn, Kathl\'{e}n},
 booktitle = {International Conference on Learning Representations},
 editor1 = {Y. Yue and A. Garg and N. Peng and F. Sha and R. Yu},
 pages1 = {14400--14416},
 title = {Geometry of Lightning Self-Attention: Identifiability and Dimension},
 url1 = {https://proceedings.iclr.cc/paper_files/paper/2025/file/259e59fe23ebd09252647fed42949182-Paper-Conference.pdf},
 volume1 = {2025},
 year = {2025}
}

@article{
brandenburg2024-tropical-neu-ag,
title={The Real Tropical Geometry of Neural Networks for Binary Classification},
author={Marie-Charlotte Brandenburg and Georg Loho and Guido Montufar},
journal={Transactions on Machine Learning Research},
issn={2835-8856},
year={2024},
url1={https://openreview.net/forum?id=I7JWf8XA2w},
note={}
}

@InProceedings{
pmlr-v80-zhang18i-tropical-neu-ag,
  title = 	 {Tropical Geometry of Deep Neural Networks},
  author =       {Zhang, Liwen and Naitzat, Gregory and Lim, Lek-Heng},
  booktitle = 	 {Proceedings of the 35th International Conference on Machine Learning},
  pages = 	 {5824--5832},
  year = 	 {2018},
  editor = 	 {Dy, Jennifer and Krause, Andreas},
  volume = 	 {80},
  series = 	 {Proceedings of Machine Learning Research},
  month = 	 {10--15 Jul},
  publisher =    {PMLR},
  url1 = 	 {https://proceedings.mlr.press/v80/zhang18i.html}
}

@article {ABKPT-2026-neu-ag,
    AUTHOR = {Arjevani, Yossi and Bruna, Joan and Kileel, Joe and Polak,
              Elzbieta and Trager, Matthew},
     TITLE = {Geometry and {O}ptimization of {S}hallow {P}olynomial
              {N}etworks},
   JOURNAL = {SIAM J. Appl. Algebra Geom.},
  FJOURNAL = {SIAM Journal on Applied Algebra and Geometry},
    VOLUME = {10},
      YEAR = {2026},
    NUMBER = {2},
     PAGES = {174--209},
      ISSN = {2470-6566},
   MRCLASS = {14Q15 (14N07 65 68T07)},
  MRNUMBER = {5061933},
       doi1 = {10.1137/25M1732994},
       url1 = {https://doi-org.ezproxy.library.wisc.edu/10.1137/25M1732994},
}

@article {KMMT-2022-neu-ag,
    AUTHOR = {Kohn, Kathl\'en and Merkh, Thomas and Mont\'ufar, Guido and
              Trager, Matthew},
     TITLE = {Geometry of linear convolutional networks},
   JOURNAL = {SIAM J. Appl. Algebra Geom.},
  FJOURNAL = {SIAM Journal on Applied Algebra and Geometry},
    VOLUME = {6},
      YEAR = {2022},
    NUMBER = {3},
     PAGES = {368--406},
      ISSN = {2470-6566},
   MRCLASS = {68T07 (14J70 14P10 62R01 90C23)},
  MRNUMBER = {4459526},
       doi1 = {10.1137/21M1441183},
       url1 = {https://doi-org.ezproxy.library.wisc.edu/10.1137/21M1441183},
}

@article{CMBDPZ2022, 
author = {Chrysos, Grigorios G. and Moschoglou, Stylianos and Bouritsas, Giorgos and Deng, Jiankang and Panagakis, Yannis and Zafeiriou, Stefanos}, 
title = {Deep Polynomial Neural Networks}, 
year = {2022}, 
issue_date1 = {Aug. 2022}, 
publisher = {IEEE Computer Society}, address = {USA}, 
volume = {44}, number = {8}, 
issn1 = {0162-8828}, 
url1 = {https://doi.org/10.1109/TPAMI.2021.3058891}, 
doi1 = {10.1109/TPAMI.2021.3058891}, 
journal = {IEEE Trans. Pattern Anal. Mach. Intell.}, 
month1 = aug, 
pages = {4021–4034}, numpages = {14} }

@inproceedings{UBDC2025,
 author = {Usevich, Konstantin and Borsoi, Ricardo and D\'{e}rand, Clara and Clausel, Marianne},
 booktitle = {Advances in Neural Information Processing Systems},
 editor2 = {D. Belgrave and C. Zhang and H. Lin and R. Pascanu and P. Koniusz and M. Ghassemi and N. Chen},
 pages = {81809--81858},
 publisher = {Curran Associates, Inc.},
 title = {Identifiability of Deep Polynomial Neural Networks},
 url1 ={https://openreview.net/forum?id=MrUsZfQ9pC},
 url2 = {https://proceedings.neurips.cc/paper_files/paper/2025/file/7575b05ab751712f950029fb96f2f999-Paper-Conference.pdf},
 volume = {38},
 year = {2025}
}

@inproceedings{KTB,
    author    = {Kileel, J. and Trager, M. and Bruna, J.},
    title     = {{O}n the {E}xpressive {P}ower of {D}eep {P}olynomial {N}eural {N}etworks},
    booktitle = {Advances in Neural Information Processing Systems},
    volume1    = {32},
    year      = {2019}
}

@article{KLW,
    author  = {Kubjas, K. and Li, J. and Wiesmann, M.},
    title   = {{G}eometry of {P}olynomial {N}eural {N}etworks},
    journal = {Algebraic Statistics},
    volume  = {15},
    number  = {2},
    pages   = {295--328},
    year    = {2024}
}

@article{FRWY,
    author  = {Finkel, Bella and Rodriguez, Jose Israel and Wu, Chenxi and Yahl, Thomas},
    title   = {{A}ctivation degree thresholds and expressiveness of polynomial neural networks},
    journal = {Algebraic Statistics},
    year    = {2025},
    volume  = {16},
    pages   = {113--130},
    doi1     = {10.2140/astat.2025.16.113}
}

@article{Massarenti2025,
    author={Massarenti, Alessio and Mella, Massimiliano},
    title={The {A}lexander-{H}irschowitz theorem for neurovarieties},
    journal = {arXiv:2511.19703},
    year    = {2025},
    doi1     = {10.48550/arXiv.2511.19703},
    url1     = {https://arxiv.org/abs/2511.19703}
}

@inproceedings{Marchettietal2025,
    author    = {Marchetti, Giovanni Luca and Shahverdi, Vahid and Mereta, Stefano and Trager, Matthew and Kohn, Kathl\'{e}n},
    title     = {{P}osition: {A}lgebra {U}nveils {D}eep {L}earning -- {A}n {I}nvitation to {N}euroalgebraic {G}eometry},
    booktitle = {Proceedings of the 42nd International Conference on Machine Learning},
    year      = {2025},
    series2    = {Proceedings of Machine Learning Research},
    publisher = {PMLR},
    doi1       = {10.48550/arXiv.2501.18915}
}

@article{Zhou2019,
    author  = {Zhou, Jun and Qian, Huimin and Lu, Xinbiao and Duan, Zhaoxia and Huang, Haoqian and Shao, Zhen},
    title   = {Polynomial activation neural networks: {M}odeling, stability analysis and coverage {BP}-training},
    journal = {Neurocomputing},
    volume  = {359},
    pages   = {227--240},
    year    = {2019},
    issn    = {0925-2312},
    doi1     = {10.1016/j.neucom.2019.06.004}
}

@INPROCEEDINGS{Goyal2020,
  author={Goyal, Mohit and Goyal, Rajan and Lall, Brejesh},
  booktitle={2020 International Joint Conference on Neural Networks (IJCNN)}, 
  title={{I}mproved {P}olynomial {N}eural {N}etworks with {N}ormalised {A}ctivations}, 
  year={2020},
  volume={},
  number={},
  pages={1-8},
  doi1={10.1109/IJCNN48605.2020.9207535}}

@article{Hossain2025,
      title={A {T}raining {F}ramework for {O}ptimal and {S}table {T}raining of {P}olynomial {N}eural {N}etworks}, 
      author={Forsad Al Hossain and Tauhidur Rahman},
      journal = {arXiv:2505.11589},
      year={2025},
      eprint={2505.11589},
      archivePrefix={arXiv},
      primaryClass={cs.LG},
      url1={https://arxiv.org/abs/2505.11589}, 
}

@article{AlexanderHirschowitz1995,
    AUTHOR = {Alexander, J. and Hirschowitz, A.},
     TITLE = {Polynomial interpolation in several variables},
   JOURNAL = {J. Algebraic Geom.},
  FJOURNAL = {Journal of Algebraic Geometry},
    VOLUME1 = {4},
      YEAR = {1995},
    NUMBER1 = {2},
     PAGES1 = {201--222},
      ISSN = {1056-3911,1534-7486},
   MRCLASS = {14N10 (14F17 14Q15)},
  MRNUMBER = {1311347},
MRREVIEWER = {Fyodor\ L.\ Zak},
}

@article{Abo2010,
title = {On non-defectivity of certain Segre-Veronese varieties},
journal = {Journal of Symbolic Computation},
volume1 = {45},
number1 = {12},
pages1 = {1254-1269},
year = {2010},
note1 = {MEGA’2009},
issn1 = {0747-7171},
doi1 = {https://doi.org/10.1016/j.jsc.2010.06.008},
url1 = {https://www.sciencedirect.com/science/article/pii/S074771711000088X},
author = {{A}bo, {H}irotachi}
}

@article{AboBrambilla2009,
author = {Hirotachi Abo and Maria Chiara Brambilla},
title = {{Secant Varieties of Segre-Veronese Varieties $\mathbb{P}^m \times \mathbb{P}^n$ Embedded by $\mathcal{O}(1,2)$}},
volume1 = {18},
journal = {Experimental Mathematics},
number1 = {3},
publisher1 = {A K Peters, Ltd.},
pages1 = {369 -- 384},
keywords1 = {defectivity, secant varieties, Segre-Veronese varieties},
year = {2009},
}
\bibliographystyle{icml2026}

\appendix
\onecolumn

\section{Jacobians, Dimension, and Finite Fields}\label{appendix:A}

In this appendix, we provide some background for Lemma \ref{lemma:counterfill} and finite field computations.

\begin{definition}
    Let $F:\mathbb{R}^{n}\to \mathbb{R}^{m}$ be a polynomial map i.e. $F(x_1,\dots,x_n)=(f_1(x_1,\dots,x_n),\dots,f_m(x_1,\dots,x_n))$ where the $f_i$ are polynomials in $n$-variables. The \textbf{Jacobian} of $F$ is defined to be the $m\times n$-matrix
    \[
    \operatorname{Jac}F=\begin{pmatrix}
        \frac{\partial f_1}{\partial x_1} & \dots & \frac{\partial f_1}{\partial x_n}\\
        \vdots & \ddots & \vdots \\
        \frac{\partial f_m}{\partial x_1} & \dots & \frac{\partial f_m}{\partial x_n}
    \end{pmatrix}.
    \]
    This is an $m\times n$-matrix whose entries are polynomials in $n$-variables. The \textbf{rank of $\operatorname{Jac}F$ at a point $x\in \mathbb{R}^n$} is the rank of the $m\times n$-matrix with real entries $\operatorname{Jac}F(x)$ obtained by evaluating each polynomial entry of $\operatorname{Jac}F$ at the point $x\in \mathbb{R}^n$.
\end{definition}

{In algebraic geometry, a property that holds outside a proper closed subvariety is said to hold \emph{generically}.}
{The key fact regarding $\operatorname{Jac}F(x)$ is that there exists a Zariski closed
subvariety $Z\subseteq \mathbb{R}^n$ such that the rank of $\operatorname{Jac}F(x)$
is constant for $x\in \mathbb{R}^n\setminus Z$.}

\begin{theorem}\label{appendix:theorem1} 
If $F:\mathbb{R}^n\to \mathbb{R}^m$ is a polynomial map, then there exists a Zariski open subset $U\subseteq \mathbb{R}^n$ so that for all $x\in U$,
\[
\operatorname{rank}(\operatorname{Jac}F(x))=\dim \overline{\operatorname{Im}(F)}.
\]
where the bar denotes taking the Zariski closure inside $\RR^m$. Furthermore, outside of $U$, the rank can only drop. 
\end{theorem}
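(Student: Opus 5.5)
The plan is to reduce the statement to standard facts about polynomial maps over $\mathbb{R}$: the generic rank of the Jacobian, and the dimension of the image closure.

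First, the Jacobian entries are polynomials, so $\operatorname{rank}\operatorname{Jac}F(x)\geq k$ holds exactly when some $k\times k$ minor of $\operatorname{Jac}F$ is nonzero at $x$. Let $\rho$ be the maximum of $\operatorname{rank}\operatorname{Jac}F(x)$ over $x\in\mathbb{R}^n$. Define $Z$ as the common zero locus of all $\rho\times\rho$ minors and set $U=\mathbb{R}^n\setminus Z$.
\begin{itemize}
\item $Z$ is Zariski closed, and it is proper since some $\rho\times\rho$ minor is a nonzero polynomial.
\item So $U$ is a nonempty Zariski open set, and it is dense because $\mathbb{R}^n$ is irreducible.
\item On $U$ the rank equals $\rho$, and off $U$ it is strictly smaller.
\end{itemize}
This already gives the ``rank can only drop'' clause. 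It remains to show $\rho=\dim\overline{\operatorname{Im}F}$.

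Let $X=\overline{\operatorname{Im}F}$, an irreducible variety as the closure of the image of an irreducible variety. Let $d=\dim X$.

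For the inequality $\rho\leq d$, consider the smooth locus of $X$: it is a nonempty Zariski open subset $X^{\mathrm{sm}}$. Its preimage $F^{-1}(X^{\mathrm{sm}})$ is nonempty, because $\operatorname{Im}F$ is dense in $X$ and so cannot lie in the proper closed set $X\setminus X^{\mathrm{sm}}$. Hence the preimage is Zariski open and dense in $\mathbb{R}^n$, and it meets $U$. At a point $x$ in the intersection, $dF_x$ maps into the tangent space $T_{F(x)}X$. This tangent space has dimension $d$, since over $\mathbb{R}$ the real points near a smooth point form a $d$-manifold. So $\rho\leq d$.

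For the inequality $\rho\geq d$, use the constant rank theorem on the open set $U$ in the Euclidean topology. Near any $x\in U$, the image $F(B)$ of a small ball $B$ is an immersed $\rho$-dimensional submanifold. Then $F(U)$ is a countable union of such pieces, and $F(\mathbb{R}^n\setminus U)$ lies in the image of a proper subvariety. One could induct on dimension there, but it is easier to use the following argument.

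Work algebraically instead: $d$ equals the transcendence degree of $\mathbb{R}(f_1,\dots,f_m)$ over $\mathbb{R}$. By the Jacobian criterion for algebraic independence in characteristic $0$, polynomials $f_{i_1},\dots,f_{i_k}$ are algebraically independent if and only if their gradients are linearly independent over $\mathbb{R}(x)$. That is equivalent to some $k\times k$ minor of the corresponding rows being a nonzero polynomial. Hence the generic rank $\rho$ equals the transcendence degree, which is $d$.

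The main obstacle is the characteristic-zero Jacobian criterion, in the direction "algebraically dependent implies gradients dependent". I would prove it as follows.
\begin{itemize}
\item Take a minimal-degree nonzero relation $P(f_{i_1},\dots,f_{i_k})=0$.
\item Differentiate with the chain rule to get $\sum_j (\partial_j P)(f)\,\nabla f_{i_j}=0$.
\item Some $\partial_j P$ is a nonzero polynomial of lower degree, which uses characteristic $0$.
\item By minimality, $(\partial_j P)(f)\neq 0$, so the gradients are dependent over $\mathbb{R}(x)$.
\end{itemize}
The converse direction comes from extending an independent set to a transcendence basis, using that each $x_\ell$ is algebraic over $\mathbb{R}(f)$ when the transcendence degree equals $n$, and reducing to that case.

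Either route, the differential one or the algebraic one, completes the identification $\rho=\dim X$. That gives the theorem with $U$ as constructed.
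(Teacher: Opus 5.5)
Your proof is correct, and it takes a genuinely different route from the paper. The paper gives no argument of its own and simply defers to Lemma A.1 of \cite{UBDC2025}, which keeps the appendix short. You instead make the result self-contained. Taking $U$ to be the non-vanishing locus of the $\rho\times\rho$ minors gives openness, density and the ``rank can only drop'' clause at once. In particular, the inequality $\operatorname{rank}\operatorname{Jac}F(x)\le\dim\overline{\operatorname{Im}F}$ for \emph{every} $x$, which is what Lemma~\ref{lemma:counterfill} actually uses, becomes immediate. The characteristic-zero Jacobian criterion then identifies $\rho$, the rank of $\operatorname{Jac}F$ over $\mathbb{R}(x)$, with $\operatorname{trdeg}_{\mathbb{R}}\mathbb{R}(f_1,\dots,f_m)=\dim\overline{\operatorname{Im}F}$, and the same argument works over any field of characteristic $0$. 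A few things to tidy. First, the algebraic route alone gives both $\rho\le d$ and $\rho\ge d$, so the smooth-locus argument is redundant, and it leans on nontrivial facts about real nonsingular points. Your differential sketch for $\rho\ge d$ is abandoned midway; finishing it would need Sard's theorem plus semialgebraic input. So the closing claim that ``either route'' finishes the proof overstates what you have. Second, the equality $\dim X=\operatorname{trdeg}_{\mathbb{R}}\mathbb{R}(f_1,\dots,f_m)$ deserves a one-line justification: since $\mathbb{R}$ is infinite, the vanishing ideal of $\operatorname{Im}F$ is the kernel of $\mathbb{R}[y]\to\mathbb{R}[x]$, $y_i\mapsto f_i$, so the coordinate ring of $X$ is the domain $\mathbb{R}[f_1,\dots,f_m]$. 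Third, in the converse half of the Jacobian criterion, write out the key step. Let $g_1,\dots,g_n$ be the transcendence basis obtained by adjoining variables to the $f_{i_j}$, and differentiate the minimal polynomial of $x_\ell$ over $\mathbb{R}(g_1,\dots,g_n)$. Its derivative in the new variable is nonzero at $x_\ell$ by separability, which places each standard basis vector $e_\ell$ in the $\mathbb{R}(x)$-span of $\nabla g_1,\dots,\nabla g_n$.
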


\begin{proof}
    See Lemma A.1 of \cite{UBDC2025}.
\end{proof}

The value of $\operatorname{rank}(\operatorname{Jac}F(x))$ on $U$ in the previous theorem is referred to as the \textbf{generic rank}. {The previous theorem implies, for generic $x\in\mathbb{R}^n$,
\[
\operatorname{rank}(\operatorname{Jac}F(x))
=
\dim \overline{\operatorname{Im}(F)}.
\]
Accordingly, if $\operatorname{Jac}\parameterMap{\mathbf{d},r}(\theta)$ has full {row} rank for some $\theta$, then $\mathcal{V}_{\mathbf{d},r}$ is filling.
In practice, such a $\theta$ is found by a random choice of parameters.  
} {For more details, the reader {can} refer to Section 3.3 of \cite{KTB}.}

\begin{example}[Linear maps]\label{ex:linear-maps}
    Let $F:\mathbb{R}^n\to \mathbb{R}^m$ be a linear map defined by
    $F(x)=Ax$ for some $m\times n$
    matrix $A\in \mathbb{R}^{m\times n}$. 
    Then
    \[
    \operatorname{Jac}F(x)=A
    \]
    for every $x\in\mathbb{R}^n$, so the rank of the Jacobian is constant and equal
    to $\operatorname{rank}(A)$. Since the image of $F$ is the linear subspace spanned by the columns of $A$, we also have
    \[
    \dim \overline{\operatorname{Im}(F)}=\dim \operatorname{Im}(A)=\operatorname{rank}(A).
    \]
    Thus the theorem holds in this case.
\end{example}

\begin{example}[A nonlinear map]
Consider the map $F:\mathbb{R}\to\mathbb{R}$, $F(x)=x^2$.
    Here
    \[
    \operatorname{Jac}F(x)=[2x].
    \]
    Hence
    \[
    \operatorname{rank}(\operatorname{Jac}F(x))=
    \begin{cases}
        1, & x\neq 0,\\
        0, & x=0.
    \end{cases}
    \]
    Thus the generic rank is $1$. On the other hand,
    \[
    \operatorname{Im}(F)=[0,\infty),
    \]
    whose Zariski closure in $\mathbb{R}$ is all of $\mathbb{R}$. Therefore
    \[
    \dim \overline{\operatorname{Im}(F)}=1,
    \]
    in agreement with the theorem.
\end{example}

\begin{example}[A quadratic map with one-dimensional image]
    Let $q(x_1,x_2,x_3)=x_1^2+x_2^2+x_3^2$. Consider the {polynomial}
    map $F:\RR^3\to \RR^2$ defined by 
    \[
    F(x_1,x_2,x_3)=(q(x_1,x_2,x_3),2q(x_1,x_2,x_3).
    \]
    Then
    \[
\operatorname{Jac}F(x_1,x_2,x_3)=
    \begin{pmatrix}
        2x_1 & 2x_2 & 2x_3\\
        4x_1 & 4x_2 & 4x_3
    \end{pmatrix},
    \]
    whose second row is always twice the first. Therefore the Jacobian has generic
    rank $1$. The image of $F$ lies in the line
    \[
    \{(u,2u):u\in\mathbb{R}\}\subseteq \mathbb{R}^2,
    \]
    and its Zariski closure is exactly this line, which has dimension $1$. 
\end{example}

\begin{corollary}\label{appendix:corollary-integer-coefficients}
    If $F:\mathbb{R}^n\to \mathbb{R}^m$ is a polynomial map with \textbf{integer coefficients},  
    {then there exists a Zariski dense open subset $U\subseteq\mathbb{R}^n$ such that for every
    $x\in \mathbb{Z}^n\cap U$, 
    one has
    \[
    \operatorname{rank}(\operatorname{Jac}F(x))
    =
    \operatorname{rank}(\operatorname{Jac}F(x)\bmod p)
    \]
    for all but finitely many primes $p$.}   
\end{corollary}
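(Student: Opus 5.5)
The plan is to reduce the corollary to a statement about minors of an integer polynomial matrix, using Theorem~\ref{appendix:theorem1} to fix the relevant rank. Let $\rho$ be the generic rank of $\operatorname{Jac}F$. Since $F$ has integer coefficients, every entry of $\operatorname{Jac}F$ is a polynomial in $\mathbb{Z}[x_1,\dots,x_n]$. Hence every $k\times k$ minor of $\operatorname{Jac}F$ is also a polynomial with integer coefficients. All $(\rho+1)\times(\rho+1)$ minors vanish identically as polynomials, because otherwise the rank would exceed $\rho$ on a nonempty Zariski open set, contradicting the choice of $\rho$. Since $\rho$ is attained, some $\rho\times\rho$ minor $M\in\mathbb{Z}[x]$ is a nonzero polynomial. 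I will take
\[
U=\{x\in\mathbb{R}^n : M(x)\neq 0\},
\]
which is a Zariski dense open subset of $\mathbb{R}^n$, because it is the complement of the hypersurface of a nonzero polynomial.

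Next, fix $x\in\mathbb{Z}^n\cap U$. The integer matrix $\operatorname{Jac}F(x)$ has rank exactly $\rho$ over $\mathbb{Q}$. Indeed, $M(x)$ is a nonzero $\rho\times\rho$ minor, and all $(\rho+1)$-minors vanish identically, so they vanish at $x$. Reduction modulo $p$ commutes with taking determinants, so the minors of $\operatorname{Jac}F(x)\bmod p$ are the reductions of the integer minors of $\operatorname{Jac}F(x)$. Two consequences follow:
\begin{itemize}
\item Every $(\rho+1)$-minor is $0$ in $\mathbb{Z}$, hence $0$ mod $p$. So $\operatorname{rank}(\operatorname{Jac}F(x)\bmod p)\le\rho$ for every prime $p$.
\item The integer $M(x)$ is nonzero, so it has only finitely many prime divisors. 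For every prime $p\nmid M(x)$ the corresponding minor is nonzero in $\mathbb{F}_p$, giving $\operatorname{rank}(\operatorname{Jac}F(x)\bmod p)\ge\rho$.
\end{itemize}
Combining the two gives equality for all but finitely many $p$.

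The main subtlety is the quantifier structure. The finite exceptional set of primes depends on $x$, namely the primes dividing $M(x)$, and is not uniform over $x\in\mathbb{Z}^n\cap U$. The corollary as stated only needs the exceptional set for each fixed $x$ to be finite, so the proof goes through. I would also remark that, in the application to Lemma~\ref{lemma:counterfill}, only the trivial inequality matters: for any integer $\theta$ and any prime $p$, a nonzero $k$-minor modulo $p$ lifts to a nonzero integer minor. This yields
\[
\operatorname{rank}(\operatorname{Jac}F(x)\bmod p)\le\operatorname{rank}(\operatorname{Jac}F(x))\le\rho ,
\]
valid for every $x\in\mathbb{Z}^n$ and every prime, which is exactly what the filling certificate uses.
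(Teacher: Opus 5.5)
Your argument is correct and is essentially the paper's own: the paper gives no standalone proof of this corollary. It derives it from Theorem~\ref{appendix:theorem1}, the observation that reduction modulo $p$ can only lower rank, and the minor-counting argument of the ``Linear maps revisited'' example (a nonzero integer minor has only finitely many prime divisors), which is exactly what you carry out pointwise with $U=\{M\neq 0\}$; your remark that Lemma~\ref{lemma:counterfill} only needs the trivial inequality $\operatorname{rank}(\operatorname{Jac}F(x)\bmod p)\le\operatorname{rank}(\operatorname{Jac}F(x))\le\rho$ likewise matches the paper's comment following the corollary.
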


In Corollary \ref{appendix:corollary-integer-coefficients}, $\operatorname{Jac}F(x)\bmod p$ is obtained by evaluating the entries of the matrix $\operatorname{Jac}F$ at $x$ and then reducing modulo $p$ to a $\mathbb{F}_p$-valued matrix. Since reduction modulo $p$ can only ever reduce the rank, if $x \in \mathbb{Z}^n$ and $p$ a prime, then $\operatorname{rank}(\operatorname{Jac}F(x)) \geq  \operatorname{rank}(\operatorname{Jac}F(x) \bmod p)$.

The parameter map for neurovarieties always has integer coefficients. So, if one can find a prime $p$ and a parameter $\theta$ in the parameter space so that $\operatorname{Jac}(\Psi_{\mathbf{d},r}(\theta)\bmod p)$ rank equal to the dimension of the ambient space, then $\mathcal{V}_{\mathbf{d},r}$ is filling. {Furthermore, if one can exhibit an upper bound $\dim\mathcal{V}_{\mathbf{d},r}\leq B$ and then find a prime $p$ and parameter $\theta\in\mathbb{Z}^n$ for which $\operatorname{rank} \operatorname{Jac}(\parameterMap{\mathbf{d},r}(\theta)\bmod p)=B$, one proves $\dim\mathcal{V}_{\mathbf{d},r}= B$.}

\begin{example}[Linear maps revisited]
Suppose $F(x)=Ax$ with $A\in \mathbb{Z}^{{m\times n}}$. Then
\[
\operatorname{Jac}F(x)=A
\]
for all $x\in\mathbb{Z}^n$. Let $r=\operatorname{rank}(A)$. The rank of $A\bmod p$ drops
below $r$ exactly when all $r\times r$ minors of $A$ vanish modulo $p$. Hence any such
prime $p$ must divide every nonzero $r\times r$ minor of $A$. Since there are only finitely
many minors, and each nonzero minor has only finitely many prime divisors, there are only
finitely many primes for which the rank drops. Thus
\[
\operatorname{rank}(A)=\operatorname{rank}(A\bmod p)
\]
for all but finitely many primes $p$.
\end{example}

\begin{example}
    For the architecture $\mathbf{d}=(2,1,1)$ and $r=2$, the parameter map is given by 
    \[
    \parameterMap{\mathbf{d},r}:
    {\RR^{1\times 2}}\times \RR^{1\times 1}\to \RR^3\qquad \left(\begin{bmatrix}
        w_{11} & w_{12}
    \end{bmatrix},\begin{bmatrix}
        w_{2}
    \end{bmatrix}\right)\mapsto 
    (w_{2}w_{11}^2,2w_2w_{11}w_{12},w_2w_{12}^2).
    \]

    {Its} Jacobian matrix is
    \[
    \operatorname{Jac}(\parameterMap{\mathbf{d},r})=\begin{bmatrix}
    2w_2w_{11} & 0 & w_{11}^2\\
    2w_2w_{12} & 2w_2w_{11} & 2w_{11}w_{12}\\
    0 & 2w_2w_{12} & w_{12}^2
    \end{bmatrix}.
    \]
    {The determinant of this matrix is identically zero, so 
    $\operatorname{Jac}(\parameterMap{\mathbf{d},r})(w_{11},w_{12},w_2)$ has rank at most $2$ for all $(w_{11},w_{12},w_2)\in \mathbb{R}^3$. 
    Moreover, at the point $(w_{11},w_{12},w_2)=(1,0,1)$, the Jacobian is
    \[
    \begin{bmatrix}
    2 & 0 & 1\\
    0 & 2 & 0\\
    0 & 0 & 0
    \end{bmatrix},
    \]
    which has rank $2$. Hence the generic rank is $2$, and therefore
    \[
    \dim \mathcal{V}_{\mathbf{d},r}=2.
    \]

    The rank of the Jacobian at a point drops below $2$ exactly when all nine $2\times 2$ minors vanish.
    For the integer point $(w_{11},w_{12},w_2)=(1,0,1)$, the reduced Jacobian modulo $p$
    has rank $2$ for every prime $p\neq 2$. 
    Thus, a single computation modulo any odd
    prime certifies that the generic rank is $2$.
    }
    
\end{example}

Lastly, the task of efficiently computing $\operatorname{Jac}\Psi_{\mathbf{d},r}$ is resolved by backpropagation. The interested reader can refer to Section~6.4 of~\cite{KLW} or to Section 3.3 of \cite{KTB}.
More broadly, for textbook references on applications of algebraic geometry we refer to 
\cite{invitation-nonlinear-algebra,metric-ag}.

\section{Verifying Completeness of List of MFAs for $\depth\leq 7$}\label{appendix:sec:verify}

In this appendix, we prove Table \ref{exhaustivelist} is a complete list of MFAs for $d_0=2,d_\depth=1$, and $\depth\leq 7$. 

\begin{lemma}[Dickson's Lemma]
Let $\mathbb{N}$ be the set of positive integers. If $(\mathbb{N}^r,\leq)$ is the partially ordered set where $(a_1,\dots,a_r)\leq (b_1,\dots,b_r)$ if and only if $a_i\leq b_i$ for all $i$, then any subset $\mathcal{S}\subseteq {\mathbb{N}^r}$ has at most finitely many minimal elements.
\end{lemma}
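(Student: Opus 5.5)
The statement to prove is Dickson's Lemma: every subset $\mathcal{S}\subseteq\mathbb{N}^r$ has only finitely many minimal elements under the componentwise order. I would deduce it from a stronger sequence statement, proved by induction on $r$:

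\emph{Claim.} Every infinite sequence $(a^{(k)})_{k\geq 1}$ in $\mathbb{N}^r$ has an infinite subsequence that is nondecreasing in every coordinate.

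Granting the claim, suppose $\mathcal{S}$ had infinitely many minimal elements. List distinct minimal elements as $a^{(1)},a^{(2)},\dots$. By the claim, two of them satisfy $a^{(k)}\leq a^{(l)}$ with $k<l$. These are distinct, so $a^{(k)}<a^{(l)}$, which contradicts the minimality of $a^{(l)}$. Hence the set of minimal elements is finite. This contradiction step is routine once the claim is in hand.

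For the claim, the base case $r=1$ is the fact that every sequence of positive integers has an infinite nondecreasing subsequence. To see this, let $m_1$ be the minimum value attained by the sequence, and take an index where it is attained. Then take the minimum of the tail after that index, which is $\geq m_1$, and repeat. Since $\mathbb{N}$ is well-ordered, each of these minima exists, and the process produces an infinite nondecreasing subsequence.

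For the inductive step, take a sequence in $\mathbb{N}^{r}$ with $r\geq 2$. Apply the induction hypothesis to the projections onto the first $r-1$ coordinates, which gives a subsequence that is nondecreasing in those coordinates. Then apply the base case to the last coordinate of that subsequence. Passing to a further subsequence preserves monotonicity in the first $r-1$ coordinates, so the result is nondecreasing in all $r$ coordinates.

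The only point needing care is this iterated extraction of subsequences: at each stage one must pass to a subsequence of the current subsequence rather than of the original sequence. I do not expect a genuine obstacle.

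In the paper's application, the lemma would be used with $\mathcal{S}$ the set of hidden-width vectors $(d_1,\dots,d_{\depth-1})$ of filling architectures, which gives finiteness of the set of MFAs.
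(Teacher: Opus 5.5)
Your proof is correct. The subsequence claim, its induction on $r$ (including the care to extract from the current subsequence), and the final step are all sound. In that final step, two distinct minimal elements with $a^{(k)}\leq a^{(l)}$ force $a^{(k)}<a^{(l)}$, contradicting minimality.

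The paper gives no proof of Dickson's Lemma. It invokes it as a classical fact, only to conclude that for fixed $r,\depth,d_0,d_\depth$ there are finitely many MFAs. So your argument supplies what the paper takes for granted; there is no competing route to compare against.

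What your route buys is the stronger statement that $\mathbb{N}^r$ has no infinite antichain (it is a well-quasi-order). That is exactly what is needed, since the minimal elements of any $\mathcal{S}$ form an antichain. The other standard proof uses the Hilbert basis theorem. A finite generating set of the monomial ideal $\langle x^{a} : a\in\mathcal{S}\rangle$ can be chosen among the $x^{a}$ with $a\in\mathcal{S}$, and minimality forces every minimal element of $\mathcal{S}$ to appear in it. That argument is shorter but rests on Noetherianity of polynomial rings, whereas yours is self-contained.

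Your closing remark on the application is also right. The definition of MFA keeps $d_0$ and $d_\depth$ fixed, so the MFAs are precisely the minimal elements of the set of filling hidden-width vectors in $\mathbb{N}^{\depth-1}$.
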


Dickson's lemma implies that the set of minimal filling architectures is finite for fixed $r,\depth,d_0,d_\depth$. Verifying that the list for $\depth\leq 6$ is exhaustive is not difficult. 
{For each $\depth\leq 6$, there is a unique MFA} and the regions consisting of incomparable architectures are never filling by repeated application of the recursive bound. 

Recall that we use $D_{\mathbf{d}}$ to denote $\dim \mathcal{V}_{\mathbf{d},r}$ and that we are fixing $d_0=2,d_\depth=1$.

\begin{lemma}
Table \ref{exhaustivelist} is complete for $\depth\leq 6$.  
\end{lemma}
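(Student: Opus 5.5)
For each depth $\depth\le 6$ I would show two things. First, the listed architecture $\mathbf{m}_\depth$ is filling; this is a finite-field Jacobian rank computation exactly as in Lemma~\ref{lemma:counterfill}. Second, every hidden-width vector $\mathbf{a}\in\mathbb{N}^{\depth-1}$ with $\mathbf{a}\not\geq \mathbf{m}_\depth$ is non-filling. Together with Lemma~\ref{lem:MFA-closed-under-increased-hidden-widths}, these give uniqueness: any MFA lies above a filling architecture, hence above $\mathbf{m}_\depth$, and minimality forces equality. The ambient dimension is $\binom{2^{\depth-1}+1}{1}=2^{\depth-1}+1$, that is $3,5,9,17,33$ for $\depth=2,\dots,6$.

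The non-filling half reduces to a finite antichain. The set of $\mathbf{a}$ with $\mathbf{a}\not\ge\mathbf{m}_\depth$ is the union over $i$ of the sets $\{a_i\le (m_\depth)_i-1\}$. By Lemma~\ref{lem:MFA-closed-under-increased-hidden-widths}, each of these sets is non-filling as soon as its supremum is. That supremum has $a_i=(m_\depth)_i-1$ and all other widths infinite. So it suffices, for each $i$, to bound $\dim\mathcal{V}$ uniformly in the other widths.

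For this I would apply Lemma~\ref{Lemma:Recursive} at $k=i$, which splits the architecture at the narrow layer $c=(m_\depth)_i-1$:
\[
D_{\mathbf{d}}\le D_{(2,a_1,\dots,a_{i-1},c)}+D_{(c,a_{i+1},\dots,a_{\depth-1},1)}-c .
\]
Each piece is bounded by its ambient dimension, whatever the other widths are. The left piece gives at most $c\,(2^{i-1}+1)$, and the right piece gives at most $\binom{c+2^{\depth-i-1}-1}{2^{\depth-i-1}}$. The sum minus $c$ should fall below $2^{\depth-1}+1$ in the cases of interest. For example, in $(2,2,1)$ the case $c=1$ gives $2+1-1=2<3$. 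In general, if $c=1$ the piece after the width-one layer is a cubic-free composition, and the bound becomes $\dim\le 2+1+\cdots$, which is small.

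When the crude ambient bound is insufficient, for instance for a layer near the output where $c$ is large relative to the remaining depth, I would apply Lemma~\ref{Lemma:Recursive} a second time. I would split at another layer and use known small dimensions: $D_{(m,n)}=mn$, and filling statuses of short architectures such as $(2,3,3)$ from Proposition~\ref{Proposition:ComputerVerification}. Shallow pieces $(c,a,1)$ are secant varieties of Veronese varieties, so Alexander--Hirschowitz gives their dimension exactly.

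The main obstacle I expect is the $\depth=6$ case with $\mathbf{m}_6=(2,3,3,4,4,2,1)$, and specifically the constraints $a_3\le 3$ and $a_4\le 3$. There a single split with ambient bounds likely gives something at least $33$. For $a_3=3$, for instance, the bound is $3\cdot 5+\binom{3+8-1}{8}-3=57$, which is far too large. I would handle these by splitting twice, isolating the width-$3$ layer together with a neighbouring bottleneck; the last layer $a_5$ feeding into width $1$ would be enumerated in a small finite range, if necessary. Any residual finitely many cases would be settled by a direct finite-field rank computation giving an upper bound on the generic rank. Strictly, such a computation certifies only a lower bound, so for these cases I would rely on recursive splits alone and choose the split points adaptively until every bound falls below $33$.
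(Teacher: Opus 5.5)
Your strategy is the paper's. You show that each coordinatewise inequality $a_i\ge (\mathbf{m}_\depth)_i$ is forced by a single application of Lemma~\ref{Lemma:Recursive} at the narrowed layer, with both pieces bounded by their ambient dimensions. You then certify the unique minimal element of the resulting orthant as filling by a finite-field Jacobian rank computation. Your uniqueness and minimality bookkeeping also matches the paper.

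The ``main obstacle'' you describe does not exist; it comes from an arithmetic slip. For $a_3=3$ at $\depth=6$, the right piece $(3,a_4,a_5,1)$ has depth $3$. By your own formula its degree is $2^{6-3-1}=4$, not $8$, so its ambient dimension is $\binom{6}{4}=15$ and the bound is $15+15-3=27<33$. For $a_4=3$ the single split gives $3\cdot 9+\binom{4}{2}-3=30<33$, which is exactly the case the paper writes out. The remaining constraints at $\depth=6$ give $4+17-2=19$, $6+9-2=13$, and $17+1-1=17$. The analogous one-split computations for $\depth\le 5$ all fall below $2^{\depth-1}+1$ as well.

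So you need no second split, no enumeration of $a_5$, and no adaptive choice of split points. As written, though, your proposal leaves the $\depth=6$ case resting on that unspecified fallback, which you never carry out. Replace it with these direct computations and the argument is complete.
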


\begin{proof}
{Since the arguments for $\depth\le 5$ 
are analogous and shorter, we present only the case $\depth=6$.}
Using the dimension of the ambient space and the recursive bound, we can impose conditions on the hidden layers' widths $d_i$ which must be satisfied for the architecture to be filling.

We claim that if $\mathbf{d}=(2,d_1,d_2,d_3,d_4,d_5,1)$ is filling, then $
    d_1\geq 3,~ d_2\geq 3,~ d_3\geq 4~,d_4\geq 4,~\text{and }d_5\geq 2.$
It is equivalent to show that if one of these inequalities fail, then $\mathbf{d}$ is not filling. The proof of these inequalities are quite similar and so we only present the case $d_4\geq 4$ fails. We must show that if $d_4<4$, then $\mathbf{d}$ cannot be filling. Indeed, if $d_4=3$, then $D_{\mathbf{d}}\leq D_{(2,d_1,d_2,d_3,3)}+D_{(3,d_5,1)}-3\leq 27+6-3=30$. The ambient space $\operatorname{Sym}_{2^{\depth-1}}(\mathbb{R}^2)$ has dimension $33$. So, $\mathbf{d}$ cannot be a filling architecture if $d_4<4$.

These inequalities imply that if $(2,\mathbf{a},1)$ is a filling architecture, then $\mathbf{a}\in\mathcal{S}$ where $\mathcal{S}=[3,\infty) \times [3,\infty)\times [4,\infty)\times [4,\infty)\times [2,\infty)\subseteq \mathbb{N}^5$.
The minimal element in $\mathcal{S}$ with respect to the partial ordering $(\mathbb{N}^5,\leq)$ given by coordinate-wise comparison is $\mathbf{a}=(3,3,4,4,2)$. We observed by computer verification that $(2,3,3,4,4,2,1)$ is filling. Thus, it must be minimal filling and it is the only MFA for $\depth=6$.
\end{proof}

\begin{lemma}\label{Lemma:L7} 
Assume that
\[
D_{(5,5,3,1)}=32,\qquad D_{(3,5,6,4,3,1)}=58,\qquad D_{(4,5,5,3,1)}=47,\qquad D_{(5,4,3,1)}=25.
\]
Then Table \ref{exhaustivelist} is complete for $\depth=7$.
\end{lemma}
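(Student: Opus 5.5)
We follow the same template as the $\depth=6$ case, with one extra layer of case analysis. Fix $\depth=7$, $d_0=2$, $d_7=1$, $r=2$. The ambient space $\operatorname{Sym}_{64}(\mathbb{R}^2)$ has dimension $65$.

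\textbf{Step 1: lower bounds on each width.} For every hidden layer $k$, we apply Lemma~\ref{Lemma:Recursive} at $k$ (and, if needed, at a neighbouring layer). This gives necessary lower bounds $d_k\ge c_k$ for $(2,d_1,\dots,d_6,1)$ to be filling. The bounds used are:
\begin{itemize}[nosep]
\item the ambient-space bounds $D_{(2,\dots,d_k)}\le d_k\binom{2^{k-1}+1}{1}$ and $D_{(d_k,\dots,1)}\le\binom{d_k+2^{6-k}-1}{2^{6-k}}$;
\item exact values $D_{(m,n)}=mn$;
\item the four assumed dimensions $D_{(5,5,3,1)}=32$, $D_{(3,5,6,4,3,1)}=58$, $D_{(4,5,5,3,1)}=47$, $D_{(5,4,3,1)}=25$.
\end{itemize}
For example, $d_6=2$ leads to $D_{\mathbf{d}}\le 2\cdot 33+2-2=66$, which is not enough on its own. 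Such borderline cases need a split at a second layer. I expect to get roughly $d_1\ge3$, $d_2\ge3$, $d_3\ge4$, $d_4\ge4$, $d_5\ge4$, $d_6\ge2$, matching the minima of the coordinates of the entries in Table~\ref{exhaustivelist}. This restricts filling architectures to an up-set $\mathcal{S}$.

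\textbf{Step 2: split off non-filling regions.} Unlike $\depth=6$, the single minimal element of $\mathcal{S}$ will not be filling. So we enumerate the thirteen MFAs of Table~\ref{exhaustivelist}, which are filling by computer verification in the style of Lemma~\ref{lemma:counterfill}. We must then show that every $\mathbf{a}\in\mathcal{S}$ that dominates none of them is non-filling.

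This complement of the up-set generated by the thirteen MFAs is a finite union of down-closed slabs. Each slab is described by conditions of the form ``$d_i\le t_i$'' for some coordinates together with the constraints of $\mathcal{S}$; Dickson's lemma guarantees there are finitely many of them. For each maximal region we find a split point $k$ (or two split points) such that the recursive bound stays below $65$ no matter how large the unbounded widths are. This works because the right-hand factors saturate at their own ambient dimension. The four hypothesised dimensions are exactly what is needed in the regions where the suffix architecture is defective and the naive ambient bound fails. For instance:
\begin{itemize}[nosep]
\item $d_4=5$, $d_5=5$, $d_6=3$ is handled using $D_{(5,5,3,1)}=32<\binom{7}{4}+\dots$;
\item $d_5=4$, $d_6=3$ uses $D_{(5,4,3,1)}=25$;
\item $D_{(4,5,5,3,1)}=47$ and $D_{(3,5,6,4,3,1)}=58$ bound the regions with $d_2=3$ or $d_3=4$ combined with a narrow tail.
\end{itemize}

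\textbf{Step 3: organisation and main obstacle.} We organise the case analysis by the value of $d_6\in\{2,3,\ge4\}$, then by $d_5$, and so on, which mirrors the structure of the table. In each case the minimal filling elements are read off, and any incomparable architecture is shown non-filling via a recursive-bound inequality. The main obstacle is Step 2: producing a complete and correct cover of the complement by boxes, each certified by one chain of recursive inequalities. The tightest cases are those near $(2,3,4,5,4,6,4,1)$, where bounds such as $61,62,63$ are close to $65$. There I would first try splitting at the layer just before the narrowest tail width. If that fails, I would split at two layers and insert one of the assumed defective tail dimensions.
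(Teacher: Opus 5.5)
\textbf{The overall plan matches the paper, but Step 2 has a genuine gap.} Your outline follows the paper's proof. You bound each width from below, which gives the up-set $[3,\infty)^2\times[4,\infty)^3\times[2,\infty)$. You then cover the complement of the up-set generated by the thirteen MFAs by finitely many boxes; the paper obtains fifteen. Each box is killed by one chain of recursive bounds, and the assumed values are used where the tail is defective. The gap is your claim that ambient bounds, $D_{(m,n)}=mn$ and the four assumed dimensions suffice. With only that toolkit, one box cannot be closed.

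\textbf{The box that fails.} Take the box $d_2\in\{3,4\}$, $d_3,d_4\in\{4,5\}$, $d_5=4$, $d_6\in\{2,3\}$; this is the paper's Region 13. Its top element $(2,d_1,4,5,5,4,3,1)$ dominates none of the thirteen MFAs, so you must show it is not filling, already for $d_1=3$. With your ingredients the best chains give exactly $65$:
\begin{itemize}
\item splitting at $d_4$: $D_{(2,d_1,4,5,5)}+D_{(5,4,3,1)}-5\le 45+25-5=65$;
\item splitting at $d_3$ and $d_4$: $25+25+25-10=65$;
\item splitting at $d_4$ and $d_5$: $45+20+10-5-4=66$.
\end{itemize}
Checking every choice of split points, even allowing expected-dimension bounds on segments, shows none goes below $65$. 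So your fallback of splitting at two layers and inserting an assumed tail dimension cannot exclude fillingness here.

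\textbf{The missing ingredient.} You need a sub-ambient bound on a \emph{prefix}: $D_{(2,d_1,4,5)}\le 24$ for all $d_1\ge 3$. The paper takes this from Massarenti--Mella. It also follows directly. The output of $(2,d_1,4,5)$ is $W_3(q_1^2,\dots,q_4^2)^T$, where the $q_i$ are binary quadrics. So the five output quartics lie in a subspace of dimension at most $4$ inside the $5$-dimensional $\operatorname{Sym}_4(\mathbb{R}^2)$. Hence their $5\times 5$ coefficient matrix is singular, and the neurovariety lies in a $24$-dimensional hypersurface. With this bound, $D\le 24+25+25-10=64<65$. Both this fact and $D_{(5,4,3,1)}=25$ are needed: either one alone still yields $65$. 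Your Step 2 should therefore allow defectivity to enter through a prefix architecture, not only through the tail.
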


\begin{proof}
    If we take our list of thirteen MFAs, then we can determine the regions which have not yet been checked for the presence of a MFA. There are fifteen such infinite rectangular regions:
\[\small
    \begin{array}{ll}
\begin{aligned}
\text{Region 1:}~ & 	[3, \infty) \times [3, 3] \times [4, \infty) \times [4, 4] \times [4, \infty) \times [2, \infty)\\
\text{Region 2:}~ & 	[3, \infty) \times [3, 3] \times [4, 4] \times [4, 5] \times [4, 5] \times [2, \infty)\\
\text{Region 3:}~ & 	[3, \infty) \times [3, 3] \times [4, \infty) \times [4, 5] \times [4, 4] \times [2, \infty)\\
\text{Region 4:}~ & 	[3, \infty) \times [3, 3] \times [4, \infty) \times [4, 5] \times [4, 5] \times [2, 3]\\
\text{Region 5:}~ & 	[3, \infty) \times [3, 3] \times [4, 5] \times [4, 6] \times [4, 4] \times [2, 3]\\
\text{Region 6:}~ & 	[3, \infty) \times [3, 3] \times [4, \infty) \times [4, 6] \times [4, 4] \times [2, 2]\\
\text{Region 7:}~ & 	[3, \infty) \times [3, 3] \times [4, \infty) \times [4, 5] \times [4, \infty) \times [2, 2]\\
\text{Region 8:}~ & 	[3, \infty) \times [3, \infty) \times [4, 4] \times [4, 4] \times [4, \infty) \times [2, \infty)\\
\text{Region 9:}~ & 	[3, \infty) \times [3, \infty) \times [4, 4] \times [4, 5] \times [4, 5] \times [2, 3]
\end{aligned}

& 

\begin{aligned}
\text{Region 10:}~ & 	[3, \infty) \times [3, \infty) \times [4, 4] \times [4, \infty) \times [4, 4] \times [2, \infty)\\
\text{Region 11:}~ & 	[3, \infty) \times [3, \infty) \times [4, 4] \times [4, 5] \times [4, \infty) \times [2, 2]\\
\text{Region 12:}~ & 	[3, \infty) \times [3, \infty) \times [4, \infty) \times [4, 4] \times [4, 5] \times [2, \infty)\\
\text{Region 13:}~ & 	[3, \infty) \times [3, 4] \times [4, 5] \times [4, 5] \times [4, 4] \times [2, 3]\\
\text{Region 14:}~ & 	[3, \infty) \times [3, \infty) \times [4, \infty) \times [4, 5] \times [4, 4] \times [2, 2]\\
\text{Region 15:}~ & 	[3, \infty) \times [3, \infty) \times [4, \infty) \times [4, 4] \times [4, \infty) \times [2, 3]
\end{aligned}
\end{array}
\]

{It suffices to show that no architecture $(2,\mathbf{a},1)$ with $\mathbf{a}$ lying in one of these regions is filling.} 
We use Lemma \ref{Lemma:Recursive} and expected dimension to derive upper bounds on the dimensions of $\mathcal{V}_{(2,\mathbf{a},1)}$. This can be done systematically.
For example, for Region 1, one considers $D_{(2,d_1,3,d_3,4,d_5,d_6,1)}$, and using the recursive bound and ambient dimension,
\[
D_{(2,d_{1},3)} + D_{(3,d_{3},4)} + D_{(4,d_{5},d_{6},1)} - 7\leq 9 + 24 + 35 - 7=61<65.
\]
We found that similar bounds can be proven for every region except Regions $4$, $5$, $9$, and 13. For Regions $4$, $5$, $9$, $13$, better estimates on the dimensions are needed and this need arises from influence of defective architectures e.g. $(5,5,3,1)$ has defect $3$. For Regions 4, 5, 9, 13, we respectively have
\begin{align*}
D_{(2,d_1,3,d_3,5,5,3,1)} & \leq& D_{(2,d_1,3)}+D_{(3,d_3,5)}+D_{(5,5,3,1)}-3-5&\leq& 9+30+32-3-5=63\\
D_{(2,d_1,3,5,6,4,3,1)} & \leq& D_{(2,d_1,3)}+D_{(3,5,6,4,3,1)}-3&\leq& 9+58-3=64\\
D_{(2,d_1,d_2,4,5,5,3,1)} & \leq& D_{(2,d_1,d_2,4)}+D_{(4,5,5,3,1)}-4&\leq& 20+47-4=63\\
D_{(2,d_1,4,5,5,4,3,1)} & \leq& D_{(2,d_1,4,5)}+D_{(5,5)}+D_{(5,4,3,1)}-10&\leq& 24+25+25-10=64.
\end{align*}
We also use $D_{(2,d_1,4,5)}\leq 24$ for $d_1\geq 3$ which follows from the main results of \cite{Massarenti2025}. 
\end{proof}

{To complete the verification that our list of MFAs for $\depth=7$ is complete, we must verify the dimension counts assumed in Lemma \ref{Lemma:L7}. To do so, we will use the Alexander-Hirschowitz Theorem and some results on nondefectivity of secant varieties of Segre-Veronese varieties.}

\begin{theorem}\cite{AlexanderHirschowitz1995}\label{Theorem:AH} If $\mathbf{d}=\left(d_0, d_1, 1\right)$, the dimension of $\mathcal{V}_{\mathbf{d}, r}$ is given by $\min \left(d_0 d_1,\binom{d_0+r-1}{r}\right)$, except for the following cases:
\begin{itemize}
\item $r=2,2 \leq d_1 \leq d_0-1$, and $\dim \mathcal{V}_{\mathbf{d}, r}=d_0d_1-\binom{d_1}{2}$,
\item $r=3, d_0=5, d_1=7$, and $\dim\mathcal{V}_{\mathbf{d}, r}=34$
\item $r=4, d_0=3, d_1=5$, and $\dim\mathcal{V}_{\mathbf{d}, r}=14$,
\item $r=4, d_0=4, d_1=9$, and $\dim\mathcal{V}_{\mathbf{d}, r}=35$
\item  $r=4, d_0=5, d_1=15$, and $\dim\mathcal{V}_{\mathbf{d}, r}=74$.
\end{itemize}
\end{theorem}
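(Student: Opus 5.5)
The plan is to recognize $\mathcal{V}_{(d_0,d_1,1),r}$ as a secant variety of a Veronese variety and then reduce to the classical interpolation statement for double points. First, the parameter map sends $(W_1,W_2)$ to $\sum_{j=1}^{d_1} w_j(a_j\cdot x)^r$, where $a_j$ is the $j$th row of $W_1$ and $w_j$ the $j$th entry of $W_2$. The scalar $w_j$ can be absorbed into $a_j$ up to sign. Hence, after Zariski closure (and complexification, which does not change dimension), the neurovariety is the affine cone over the $d_1$-th secant variety $\sigma_{d_1}(\nu_r(\mathbb{P}^{d_0-1}))$ of the degree-$r$ Veronese variety. So it suffices to compute $\dim\sigma_{d_1}(\nu_r(\mathbb{P}^{d_0-1}))$.

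Next I would apply Terracini's lemma, which is the same Jacobian-rank idea as \Cref{appendix:theorem1}. At a generic point, the image of $\operatorname{Jac}\parameterMap{\mathbf{d},r}$ is the span $\sum_j \ell_j^{r-1}\cdot\operatorname{Sym}_1(\mathbb{R}^{d_0})$ with $\ell_j=a_j\cdot x$. By apolarity, its orthogonal complement is the space of degree-$r$ forms singular at the $d_1$ generic points $[a_j]$. Thus
\[
\dim\mathcal{V}_{\mathbf{d},r}=\binom{d_0+r-1}{r}-h,
\]
where $h$ is the dimension of degree-$r$ hypersurfaces with double points at $d_1$ generic points. The expected value of $h$ is $\max\bigl(0,\binom{d_0+r-1}{r}-d_0d_1\bigr)$. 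The theorem is therefore equivalent to the statement that $d_1$ generic double points impose independent conditions on degree-$r$ forms, apart from the listed exceptions.

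The exceptions I would check directly. For $r=2$, the variety is the set of symmetric matrices of rank at most $d_1$. Its dimension is $d_0d_1-\binom{d_1}{2}$, which falls short of the count when $2\le d_1\le d_0-1$. For $(r,d_0,d_1)=(4,3,5)$, the square of the conic through five points is a nonexpected quartic singular there. The cases $(4,4,9)$ and $(4,5,15)$ are analogous, using the square of the quadric through the points. For $(3,5,7)$, there is a cubic singular along the rational normal quartic through seven points. In each case the dimension drops by exactly one, which I would confirm by a Jacobian computation in the style of \Cref{lemma:counterfill}.

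The main obstacle is the non-defectivity in all remaining cases. For that I would follow the Horace (differential) method. The induction runs on $d_0$ and $r$: specialize some of the points onto a hyperplane and use the residual exact sequence. This splits the problem into a trace problem on $\mathbb{P}^{d_0-2}$ in degree $r$ and a residual problem in degree $r-1$, with the base cases handled by explicit computation. Arranging the counts so that both pieces are numerically balanced is the delicate part; the differential Horace lemma is needed for this. Realistically, rather than reproducing it, I would invoke \cite{AlexanderHirschowitz1995} for this step.
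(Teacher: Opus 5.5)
Your reduction is the standard one, and it is all the paper relies on. The paper gives no argument beyond citing \cite{AlexanderHirschowitz1995}. Your passage to the cone over $\sigma_{d_1}(\nu_r(\mathbb{P}^{d_0-1}))$, the use of Terracini and apolarity to reduce to double-point interpolation, and the deferral of the Horace induction to Alexander--Hirschowitz are all correct. Your symmetric-rank argument for $r=2$ is complete on its own. That bullet is the only one the paper ever uses ($D_{(5,3,1)}=12$, $D_{(4,3,1)}=9$).

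The genuine problem is your paragraph on the $r=4$ exceptions. It does not prove what is stated, and in one case it relies on an object that does not exist.

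For $(r,d_0,d_1)=(4,4,9)$ the ambient dimension is $\binom{7}{4}=35$. Your own construction, the square of the unique quadric through $9$ general points of $\mathbb{P}^3$, gives $h\geq 1$, i.e.\ $\dim\mathcal{V}\leq 34$. This contradicts the printed value $35$.

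For $(4,5,15)$ there is no quadric through $15$ general points of $\mathbb{P}^4$. Quadrics in five variables form a $15$-dimensional space, and $15$ general points impose independent conditions on it. Moreover, the printed value $74$ exceeds the ambient dimension $\binom{8}{4}=70$, so that bullet is impossible as written. The true exceptional case is $d_1=14$ (the square of the quadric through $14$ points), with $\dim\mathcal{V}_{(5,14,1),4}=69$. The architecture $(5,15,1)$ is filling.

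The printed values in the last two bullets equal $d_0d_1-1$. That formula is correct only when $d_0d_1$ equals the ambient dimension, as in the $(3,5,7)$ and $(4,3,5)$ cases. So your claim that ``in each case the dimension drops by exactly one,'' to be confirmed by a Jacobian computation, does not verify the statement: carried out, it would refute bullets four and five.

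The fix is to correct the statement: bullet four should give $34$, and bullet five should read $d_1=14$ with dimension $69$. After that your outline goes through: the explicit singular form gives the upper bound, a mod-$p$ Jacobian rank gives the lower bound, and \cite{AlexanderHirschowitz1995} gives non-defectivity elsewhere.
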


\begin{theorem}\cite{Abo2010}\label{Theorem:Abo}
Let $r=2$. If $\mathbf{d}=(d_0,d_1,d_2)$ satisfies either $d_2=d_0+1$ or $d_2=d_0$, and $(d_0,d_1,d_2)\neq (4,6,5)$, then $\mathcal{V}_{\mathbf{d},r}$ is not defective.
\end{theorem}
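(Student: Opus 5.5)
The plan is to translate the statement into one about secant varieties and then prove it by Terracini's lemma together with an inductive degeneration, in the style of Abo and Abo--Brambilla. Here is the translation. For $r=2$ and $\mathbf{d}=(d_0,d_1,d_2)$, a point of $\mathcal{F}_{\mathbf{d},2}$ is a vector of quadrics
\[
\sum_{j=1}^{d_1} w_j\,(a_j\cdot x)^2,\qquad w_j\in\RR^{d_2},\ a_j\in\RR^{d_0},
\]
that is, a sum of $d_1$ tensors of the form $w\otimes a^{2}$. So $\mathcal{V}_{\mathbf{d},2}$ is the affine cone over the $d_1$-th secant variety $\sigma_{d_1}(X)$ of the Segre--Veronese variety $X=\mathbb{P}^{d_2-1}\times\mathbb{P}^{d_0-1}$ embedded by $\mathcal{O}(1,2)$. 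Since $\dim X=d_0+d_2-2$, the expected affine dimension is
\[
\min\Bigl(d_1(d_0+d_2-1),\ d_2\tbinom{d_0+1}{2}\Bigr),
\]
which matches $\expdim$ with $\leftExpDimension$ for $\depth=2$. Non-defectivity of $\mathcal{V}_{\mathbf{d},2}$ is therefore exactly non-defectivity of $\sigma_{d_1}(X)$.

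By Terracini's lemma (equivalently, Theorem~\ref{appendix:theorem1} applied to $\parameterMap{\mathbf{d},2}$), the dimension equals the dimension of the span of the tangent spaces at $d_1$ general points. In dual form, this amounts to computing the dimension of the space of bihomogeneous forms of bidegree $(1,2)$ on $\mathbb{P}^{d_2-1}\times\mathbb{P}^{d_0-1}$ that are singular at $d_1$ general double points. We must show this linear system has the expected dimension, i.e. that the double-point conditions are independent or the system is empty. Monotonicity in $d_1$ lets us treat only two values of $d_1$ for each $(d_0,d_2)$: the largest $d_1$ with $d_1(d_0+d_2-1)\le d_2\binom{d_0+1}{2}$ (injectivity), and the next one up (surjectivity).

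We would prove the cases $d_2=d_0$ and $d_2=d_0+1$ by induction on $d_0$, using a Horace-type differential specialization. Specialize some of the double points onto a hyperplane $H\times\mathbb{P}^{d_0-1}$ or $\mathbb{P}^{d_2-1}\times H'$. Then apply the Castelnuovo exact sequence, which splits the problem into a trace on the smaller Segre--Veronese of bidegree $(1,2)$ and a residual of bidegree $(1,1)$ or $(0,2)$. The residual systems are governed by classical results: Segre varieties are non-defective in the relevant range, and the bidegree $(0,2)$ part reduces to quadrics, handled via Theorem~\ref{Theorem:AH}. The trace falls back into the family $d_2\in\{d_0,d_0+1\}$ with smaller $d_0$, which is why these two diagonals are exactly the ones closed under the induction.

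Base cases for small $d_0$ would be checked by the finite-field Jacobian rank computation of Corollary~\ref{appendix:corollary-integer-coefficients}. This is also how the genuine exception $(4,6,5)$ is isolated: there the expected value is $\min(6\cdot 8,50)=48$, but the computed rank is $47$.

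The main obstacle is the numerical bookkeeping in the Horace step. The number of points placed on the hyperplane rarely divides evenly, so one needs the ``differential'' variant, in which some points contribute only partially to the trace. One must also check that the arithmetic stays in range for every $d_0$ in both congruence classes, and that the single defective configuration $(4,6,5)$ never appears as a trace or residual needed in the induction. Failing a clean induction, we would fall back on proving the two critical values of $d_1$ separately, using a direct degeneration to $d_0=d_2$ with one extra $\mathbb{P}^{0}$ factor.
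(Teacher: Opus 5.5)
Your dictionary is correct, and it is all the paper actually uses. For $r=2$, $\mathcal{V}_{(d_0,d_1,d_2),2}$ is the affine cone over $\sigma_{d_1}$ of $\mathbb{P}^{d_2-1}\times\mathbb{P}^{d_0-1}$ embedded by $\mathcal{O}(1,2)$. The count $d_1(d_0+d_2-1)$ is the paper's $\expdim$, and the two diagonals are $m\in\{n,n+1\}$ for $X_{m,n}=\mathbb{P}^m\times\mathbb{P}^n$. The exception $(4,6,5)$ is $\sigma_6(X_{4,3})$. The paper gives no argument beyond this identification; it simply quotes \cite{Abo2010}. What you add on top is a plan, not a proof. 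The Horace induction, the range of base cases, and the check that $(4,6,5)$ never feeds a later step are all explicitly deferred, and the inputs you name for the residual systems are wrong. As an aside, a single rank-$47$ computation at $(4,6,5)$ only bounds the dimension from below; fortunately the statement makes no claim there.

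On the diagonals you are forced to alternate the two cuts. From $d_2=d_0$ only $\mathbb{P}^{d_2-1}\times H'$ keeps the trace in the family, and from $d_2=d_0+1$ only $H\times\mathbb{P}^{d_0-1}$ does. So both residual types occur, and neither is non-defective.

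\emph{The $\mathcal{O}(1,1)$ residual.} It consists of bilinear forms $y^{T}Bx$. Here $k$ general double points force $Bx_j=0$ and $B^{T}y_j=0$, giving dimension $(d_0-k)(d_2-k)$. This exceeds the expected $d_0d_2-k(d_0+d_2-1)$ by $k(k-1)$: secants of a two-factor Segre are determinantal and defective for $2\le k<\min(d_0,d_2)$.

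\emph{The $\mathcal{O}(0,2)$ residual.} It consists of quadrics in $x$ alone. On these, a double point imposes only $d_0$ conditions instead of $d_0+d_2-1$, and so does the scheme $2p\cap D$ used in differential Horace. The $r=2$ case of Alexander--Hirschowitz that you invoke is exactly the defective one, since such quadrics are cones.

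\emph{A concrete failure.} This already breaks the induction at the filling case $(d_0,d_1,d_2)=(6,13,7)$, where $13\cdot 12=156\ge 147$. Put $k$ points off $D=H\times\mathbb{P}^5$, and $t$ ordinary plus $u$ differential points on it, with $k+t+u=13$. The trace on $\mathbb{P}^5\times\mathbb{P}^5$ has $126$ sections and $11$ conditions per double point, so it can vanish only if $11t+u\ge 126$. The residual consists of quadrics on $\mathbb{P}^5$ singular at $k+u$ general points and through $t$ further points, so it can vanish only if $\binom{7-k-u}{2}\le t$. These two conditions are incompatible; the best choice, $t=12$, $k=1$, leaves a residual of dimension $15-12=3$. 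The other cut sends the trace to $\mathbb{P}^6\times\mathbb{P}^4$, off the diagonals.

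So the induction does not close with the ingredients you list. Making it close requires a genuinely different set-up, such as an enlarged inductive family or a different degeneration, together with explicit base cases; that is the actual content of \cite{Abo2010}. The right move here is to stop after your translation and cite that theorem, as the paper does.
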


\begin{theorem}\label{Theorem:AboBrambilla}
If $r=2$ and $\mathbf{d}=(d_0,d_1,d_2)$ satisfies $\binom{d_0-1+r}{r}-d_0<d_1<\min\left\{d_2,\binom{d_0-1+r}{r}
\right\}$, then $\mathcal{V}_{\mathbf{d},r}$ is defective. In particular, this applies to architecture $(3,5,6)$ with $r=2$.
\end{theorem}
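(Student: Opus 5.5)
The plan is to realize $\mathcal{V}_{\mathbf{d},2}$ for $\mathbf{d}=(d_0,d_1,d_2)$ inside a determinantal variety and then compare dimensions. Write $N=\binom{d_0+1}{2}=\dim\operatorname{Sym}_2(\mathbb{R}^{d_0})$, let $w_j$ be the rows of $W_1$, and let $a_j$ be the columns of $W_2$. Then
\[
p_\theta=\sum_{j=1}^{d_1} a_j\otimes (w_j\cdot x)^2\in \mathbb{R}^{d_2}\otimes\operatorname{Sym}_2(\mathbb{R}^{d_0})\cong\mathbb{R}^{d_2\times N}.
\]
So $p_\theta$ is a $d_2\times N$ matrix of rank at most $d_1$, and its row space lies in the span $R$ of the $d_1$ squares $\ell_j^2$, where $\ell_j=w_j\cdot x$.

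\textbf{Step 1 (upper bound).} Any matrix of rank at most $d_1$ whose row space lies in a fixed $d_1$-dimensional $R$ is an element of $\mathbb{R}^{d_2}\otimes R$, which has $d_1d_2$ parameters. The space $R$ itself ranges over $\operatorname{Gr}(d_1,N)$, of dimension $d_1(N-d_1)$. Hence
\[
\dim\mathcal{V}_{\mathbf{d},2}\le d_1d_2+d_1(N-d_1).
\]

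\textbf{Step 2 (expected dimension).} Here $\expdim=\min(d_0d_1+d_1d_2-d_1,\;d_2N)$. Since $d_1<\min(d_2,N)$, we have
\[
d_2N-d_1d_2-d_1(N-d_1)=(N-d_1)(d_2-d_1)>0,
\]
so the bound from Step 1 is strictly below the ambient dimension. It is also strictly below $d_1d_2+d_1(d_0-1)$ exactly when $N-d_1<d_0-1$, that is, when $d_1>N-d_0+1$. Under that inequality the defect is at least $d_1\bigl(d_1-(N-d_0+1)\bigr)>0$, which proves defectivity. For $(3,5,6)$ we have $N=6$, $d_0=3$ and $N-d_0+1=4<5$, so this argument gives $\dim\le 30+5=35<\expdim=36$, proving the ``in particular'' clause.

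\textbf{Step 3 (boundary case and main obstacle).} The hypothesis as worded also allows $d_1=N-d_0+1$. In that case Step 1 only gives $\dim\le\expdim$, so a finer argument is needed. I would look at the rational map $(\mathbb{P}^{d_0-1})^{d_1}\to\operatorname{Gr}(d_1,N)$ sending $(\ell_j)$ to $\operatorname{span}(\ell_j^2)$. Both sides have dimension $d_1(d_0-1)$, so defectivity would require this map to be non-dominant. The first test I would run is the smallest case, $d_0=2$, $d_1=2$, $d_2=3$, which gives the architecture $(2,2,3)$. There a generic pencil of binary quadrics meets the discriminant conic in two points, so it is spanned by two squares. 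The map is then dominant and $\dim\mathcal{V}_{(2,2,3),2}=8=\expdim$.

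This suggests the boundary case of the statement is \emph{not} true as worded. I would therefore verify $(2,2,3)$ by the Jacobian-rank method of Lemma~\ref{lemma:counterfill}: compute a rank-$8$ Jacobian mod $p$, which is certified by Corollary~\ref{appendix:corollary-integer-coefficients}. The theorem should then be stated with the strict lower bound $d_1>\binom{d_0+1}{2}-d_0+1$, which is what Steps 1--2 prove. This corrected version still covers $(3,5,6)$, the only case used in the verification of Lemma~\ref{Lemma:L7}.
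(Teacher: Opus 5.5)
Your argument is correct, and it takes a different route from the paper, whose proof is only a pointer to Remark~4.4 of \cite{AboBrambilla2009}. You give a self-contained count instead. The image of $\Psi_{\mathbf{d},2}$ lies in the variety of $d_2\times N$ matrices of rank at most $d_1$, which has dimension $d_1(d_2+N-d_1)$, and you compare this with both terms of the minimum defining $\expdim$. Your route gives an elementary, checkable proof of exactly the bound the appendix uses, $D_{(3,5,6)}\le 35<36$, and it shows where the count has slack. The citation gives brevity and places the result inside the classification of defective secants of $\mathbb{P}^m\times\mathbb{P}^n$ embedded by $\mathcal{O}(1,2)$.

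Your boundary analysis is also right: the hypothesis as printed is off by one. The architecture $(2,2,3)$ satisfies $1<2<\min\{3,3\}$, yet $\mathcal{V}_{(2,2,3),2}$ is the whole hypersurface $\{\det=0\}\subset\mathbb{R}^{3\times 3}$, of dimension $8=\expdim$. This even contradicts the paper's own Theorem~\ref{Theorem:Abo}, since $d_2=d_0+1$ for this architecture. The correct range is $\binom{d_0+1}{2}-(d_0-1)<d_1<\min\{d_2,\binom{d_0+1}{2}\}$. This is presumably the source's condition $\binom{n+2}{2}-n<s$ with $n=d_0-1$, mistranscribed with $n$ replaced by $d_0$. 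The application to $(3,5,6)$ is unaffected.

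You do not need a Jacobian computation for $(2,2,3)$; your pencil argument already proves it. Over $\mathbb{R}$, though, it is not a generic pencil but a nonempty Euclidean-open set of pencils that meets the cone of squares in two real lines. That open set suffices for the Zariski closure.

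One small correction in Step~2. The guaranteed defect is $\min\{d_1(d_1-N+d_0-1),\,(N-d_1)(d_2-d_1)\}$, not $d_1(d_1-N+d_0-1)$, because $\expdim$ may equal the ambient dimension. This happens for $(3,5,6)$ itself: $\expdim=36=d_2N$, and your bound leaves a gap of only $(6-5)(6-5)=1$. In fact the defect is exactly $1$, whereas your formula would predict at least $5$. Both quantities are positive under the corrected hypothesis, so the conclusion stands.
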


\begin{proof}
    See Remark 4.4 of \cite{AboBrambilla2009}.
\end{proof}

{The following lemma supplies the four dimension computations used in Lemma~\ref{Lemma:L7}, and hence completes the proof that the table of MFAs is complete for $\depth\leq 7$, $d_0=2$, $d_\depth=1$, and $r=2$.}

\begin{lemma}
    If $r=2$, then $D_{(5,5,3,1)}=32$, $D_{(3,5,6,4,3,1)}=58$, $D_{(4,5,5,3,1)}=47$, and $D_{(5,4,3,1)}=25$.
\end{lemma}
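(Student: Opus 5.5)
The plan is to prove each equality by matching an upper bound with a lower bound. The upper bound comes from the recursive bound of Lemma~\ref{Lemma:Recursive}, with the pieces controlled by Theorem~\ref{Theorem:AH}, Theorem~\ref{Theorem:Abo} and Theorem~\ref{Theorem:AboBrambilla}. The lower bound comes from a certified Jacobian rank computation modulo a prime, exactly as in the proof of Lemma~\ref{lemma:counterfill}. Recall the fact noted after Corollary~\ref{appendix:corollary-integer-coefficients}: if $\dim\mathcal{V}_{\mathbf{d},r}\le B$ and some integer $\theta$ and prime $p$ give $\operatorname{rank}(\operatorname{Jac}\parameterMap{\mathbf{d},r}(\theta)\bmod p)=B$, then $\dim\mathcal{V}_{\mathbf{d},r}=B$.

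For the upper bounds I will split each architecture as follows, using $D_{(m,n)}=mn$ throughout.

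For $(5,5,3,1)$, split at $k=1$:
\[D_{(5,5,3,1)}\le D_{(5,5)}+D_{(5,3,1)}-5.\]
By Theorem~\ref{Theorem:AH} ($r=2$, $2\le d_1=3\le 4$) we get $D_{(5,3,1)}=15-3=12$. Hence the bound is $25+12-5=32$.

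For $(5,4,3,1)$, the same split gives
\[D_{(5,4,3,1)}\le 20+D_{(4,3,1)}-4.\]
Theorem~\ref{Theorem:AH} gives $D_{(4,3,1)}=12-3=9$, so the bound is $25$.

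For $(4,5,5,3,1)$, split at the second hidden layer:
\[D_{(4,5,5,3,1)}\le D_{(4,5,5)}+D_{(5,3,1)}-5.\]
Theorem~\ref{Theorem:Abo} applies since $d_2=d_0+1$ and $(4,5,5)\neq(4,6,5)$. So $(4,5,5)$ is nondefective, with expected dimension $\min(20+25-5,\,5\cdot 10)=40$. The bound is then $40+12-5=47$.

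For $(3,5,6,4,3,1)$, split at the layer of width $6$:
\[D_{(3,5,6,4,3,1)}\le D_{(3,5,6)}+D_{(6,4,3,1)}-6.\]
The expected dimension of $(3,5,6)$ is $\min(15+30-5,\,6\cdot 6)=36$. Theorem~\ref{Theorem:AboBrambilla} says it is defective, so $D_{(3,5,6)}\le 35$. For the second piece, $D_{(6,4,3,1)}\le D_{(6,4)}+D_{(4,3,1)}-4=24+9-4=29$. The total is $35+29-6=58$.

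For the lower bounds, I will run the backpropagation Jacobian computation of \cite{KLW} for each of the four architectures at random integer parameters. I will reduce modulo a large prime such as $p=2^{31}-1$ and check that the ranks are $32,25,47,58$ respectively. Since reduction mod $p$ can only lower rank, and the rank at any point is at most the generic rank by Theorem~\ref{appendix:theorem1}, each such computation certifies the matching lower bound.

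The main obstacle is the $(3,5,6,4,3,1)$ case, for two reasons. First, the upper bound needs the defect of $(3,5,6)$, and $58$ is only attained if that defect is exactly one. Second, the Jacobian computation must actually hit rank $58$. If a random point gives a lower rank, I would resample parameters and primes. If the rank stubbornly stays below $58$, I would try alternative splits to see whether the true dimension is smaller. The other three cases follow directly once the classical theorems are matched with the computations.
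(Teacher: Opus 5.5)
Your proposal is correct and takes essentially the same approach as the paper. The upper bounds $32,58,47,25$ come from the same recursive splits combined with Theorem~\ref{Theorem:AH}, Theorem~\ref{Theorem:Abo} and Theorem~\ref{Theorem:AboBrambilla}, and the matching lower bounds come from the Jacobian ranks modulo a prime computed by backpropagation. The only difference is cosmetic: for $(3,5,6,4,3,1)$ you split at the width-$6$ layer first rather than the width-$4$ layer, which gives the same bound $D_{(3,5,6)}+D_{(6,4)}+D_{(4,3,1)}-10\le 58$.
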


\begin{proof}
To show each of the following equalities, we need to find an upper bound as these are the reported dimensions obtained from the  
{backpropagation}
algorithm of \cite{KLW}.

\noindent $D_{(5,5,3,1)}=32$. Using the recursive bound and  Theorem \ref{Theorem:AH} to get $D_{(5,3,1)}=12$, one has
\[
D_{(5,5,3,1)}\leq D_{(5,5)}+D_{(5,3,1)}-5=25+D_{(5,3,1)}-5=20+D_{(5,3,1)}=32.
\]

\noindent $D_{(3,5,6,4,3,1)}=58$. Use Theorem \ref{Theorem:AH} to get $D_{(4,3,1)}=9$. Then, utilizing the recursive bound,
\[
D_{(3,5,6,4,3,1)}\leq D_{(3,5,6,4)}+D_{(4,3,1)}-4\leq D_{(3,5,6,4)}+D_{(4,3,1)}-4=D_{(3,5,6,4)}+9-4=D_{(3,5,6,4)}+5.
\]
It remains to show $D_{(3,5,6,4)}=53$. By the recursive bound, we have $D_{(3,5,6,4)}\leq D_{(3,5,6)}+D_{(6,4)}-6=D_{(3,5,6)}+18$. Theorem \ref{Theorem:AboBrambilla} implies $D_{(3,5,6)}<36$. Therefore, $D_{(3,5,6,4,3,1)}\leq 58$.

\noindent $D_{(4,5,5,3,1)}=47$. Since Theorem \ref{Theorem:AH} and Theorem \ref{Theorem:Abo} imply $D_{(5,3,1)}=12$ and $D_{(4,5,5)}=40$, the recursive bound provides
\[
D_{(4,5,5,3,1)}\leq D_{(4,5,5)}+D_{(5,3,1)}-5=40+12-5=47.
\]

\noindent $D_{(5,4,3,1)}=25$. Theorem \ref{Theorem:AH} says $D_{(4,3,1)}=9$, 
and so the recursive bound gives
\[
D_{(5,4,3,1)}\leq D_{(5,4)}+D_{(4,3,1)}-4=16+D_{(4,3,1)}=25.
\]
\end{proof}

\section{Parameter choices for Lemma~\ref{lemma:counterfill}}\label{appendix:parameterchoices}
The weight matrices for the computation in Lemma~\ref{lemma:counterfill} are provided below. We used the prime $p=101$.

$W_1 = \begin{bmatrix}
94&6\\
9&7\\
58&11    
\end{bmatrix}$

$W_2=\begin{bmatrix}96&20&67\\
83&52&74\\
30&41 &2\\
3&90 &1
\end{bmatrix}$

$W_3=\begin{bmatrix}23&36&99&71\\
27&94&56&3\\
91&50&65&62\\
24&20&30&76\\
41&49&76&81
\end{bmatrix}$

$W_4=\begin{bmatrix} 94 &55 &89 &90 &60\\
100 &41 &6 &49 &83\\
78 &56 &50 &3 &56\\
16 &46 &15 &10 &16
\end{bmatrix}$

$W_5=\begin{bmatrix}51&37&71&85\\
97&23&57&56\\
12&47&41&60\\
31&60&63&15\\
54 &2&57&60\\
40 &2&34&83
\end{bmatrix}$

$W_6=\begin{bmatrix}57&90 &8&19&55&36\\
8&35&94&89&25&36\\
8&55 &6&69&14&28\\
59 &7&85&82&60 &4
\end{bmatrix}$

$W_7=
\begin{bmatrix}
99&42&87&98
\end{bmatrix}$

\end{document}